\documentclass[10pt]{article}
\usepackage{float}

\PassOptionsToPackage{round,authoryear}{natbib}
\usepackage[final]{saints}          % line-numbered internal draft

\usepackage{wrapfig}
\usepackage{algorithm}
\usepackage{algorithmic}
\setcitestyle{authoryear,round,citesep={;},aysep={,},yysep={;}}
\usepackage{multirow}
\usepackage[table]{xcolor}
\usepackage{enumitem}
\usepackage{adjustbox}
\usepackage{aliascnt}

\usepackage{tikz}
\usetikzlibrary{patterns,calc,arrows.meta,positioning,intersections}
\usepackage[normalem]{ulem}
\usepackage{multirow}

\newcommand{\mymacro}[1]{#1}

\newcommand{\R}{\mymacro{\mathbb{R}}}

\newcommand{\N}{\mymacro{\mathbb{N}}}

\newcommand{\floor}[1]{\left\lfloor #1 \right\rfloor}

\newtheoremstyle{saintsplain}
  {6pt}
  {6pt}
  {\itshape}
  {}
  {\bfseries}
  {.}
  {0.5em}
  {\thmname{#1}\thmnumber{ #2}\thmnote{ \bfseries(#3)}}

\newtheoremstyle{saintsdefinition}
  {6pt}
  {6pt}
  {\normalfont}
  {}
  {\bfseries}
  {.}
  {0.5em}
  {\thmname{#1}\thmnumber{ #2}\thmnote{ \bfseries(#3)}}

\newcommand{\bits}{\{0,1\}}
\newcommand{\Mcal}{\mathcal{M}}

\newcommand{\W}{\mathbf{w}}
\newcommand{\bb}{\mathbf{b}}
\newcommand{\zz}{\mathbf{z}}
\newcommand{\What}{\widehat{\W}}

\newcommand{\ceil}[1]{\left\lceil #1 \right\rceil}

\newcommand{\Stir}{\mathrm{Stir}}

\newcommand{\enc}{\mathrm{enc}}

\renewcommand{\PaperVenueLine}{}
\renewcommand{\PaperTitle}{Algorithmic Simplification of Neural Networks with Mosaic-of-Motifs}
\renewcommand{\PaperLeadAuthors}{Pedram Bakhtiarifard}
\renewcommand{\PaperAuthors}{Pedram Bakhtiarifard, Tong Chen, Jonathan Wensh{\o}j, \\ Erik B. Dam, Raghavendra Selvan}

\renewcommand{\PaperAffiliations}{Department of Computer Science, University of Copenhagen, Denmark}
\renewcommand{\PaperDate}{April 2026}
\renewcommand{\PaperCorrespondence}{\href{mailto:\{pba,raghav\}@di.ku.dk}{\texttt{\{pba,raghav\}@di.ku.dk}}}
\renewcommand{\PaperCode}{\href{https://github.com/saintslab/MoMos}{\texttt{github.com/saintslab/MoMos}}}
\renewcommand{\PaperBibliographyStyle}{abbrvnat}

\begin{document}

\makePaperFrontmatter

\begin{abstract}
Large-scale deep learning models are well-suited for compression. Across a variety of tasks, methods like pruning, quantization, and knowledge distillation have been used to achieve massive reductions in model parameters with only marginal performance drops. This raises the central question: \emph{Why are deep neural networks suited for compression?} In this work, we take up the perspective of algorithmic complexity to explain this behavior. We hypothesize that the parameters of trained models have more structure and, hence, exhibit lower algorithmic complexity compared to the weights at (random) initialization. Furthermore, model compression methods harness this reduced algorithmic complexity to compress models. Although an unconstrained parameterization of model weights, $\mathbf{w} \in \mathbb{R}^n$, can represent arbitrary weight assignments, the solutions found during training exhibit repeatability and structure, making them simpler to implement than a trivial program. To this end, we formalize the Kolmogorov complexity of $\mathbf{w}$ by $\mathcal{K}(\mathbf{w})$. We introduce a constrained parameterization $\widehat{\mathbf{w}}$ that partitions parameters into blocks of size $s$ and restricts each block to be selected from a set of $k$ reusable motifs, specified by a reuse pattern (or mosaic). The resulting method, \textit{Mosaic-of-Motifs} (MoMos), provides a theoretically justified parameterization that biases optimization toward algorithmically simpler solutions. Empirical evidence from multiple experiments shows that MoMos consistently lowers the algorithmic complexity of neural networks during training while preserving the performance of unconstrained models. These results suggest that parameter compressibility is not only observed after training, but can be induced from the optimization domain.
\end{abstract}

\section{Introduction}
Better model performance is regularly achieved through scale. Large models, trained on internet-scale datasets with a substantial amount of compute, reliably deliver improved performance~\citep{kaplan2020scalinglawsneurallanguage, rosenfeld2019constructivepredictiongeneralizationerror}. We observe this trend across many domains, such as computer-vision~\citep{dosovitskiy2021imageworth16x16words}, multi-modal learning~\citep{radford2021learningtransferablevisualmodels}, and reinforcement-learning~\citep{vinyals2019grandmaster}. And although new methodologies and architectures undoubtedly contribute to these advances, performance increases consistently coincide with an increase in scale.

At the same time, scaling incurs substantial costs in memory, compute, and energy~\citep{patterson2021carbonemissionslargeneural,schwartz2019greenai}, which has motivated extensive work on compression techniques, including pruning~\citep{frankle2019lotterytickethypothesisfinding}, quantization~\citep{nagel2021whitepaperneuralnetwork}, distillation~\citep{hinton2015distillingknowledgeneuralnetwork}, and parameter sharing~\citep{pham2018efficientneuralarchitecturesearch}. This has demonstrated that modern models are often highly redundant and can be reduced by orders of magnitude with limited, or even no degradation in performance~\citep{han2016deepcompressioncompressingdeep}. Moreover, it reinforces that sheer scale is not necessarily the linchpin for advances in model capability.

A common notion used to reason about the scale of models is their complexity, which is often associated with parameter count, layer-wise parametrization, or computational cost~\citep{tan2020efficientnetrethinkingmodelscaling}. Such measures correlate with performance and can relate model error to scale~\citep{hestness2017deeplearningscalingpredictable}, yet they remain agnostic to how parameters are structured or reused across model parameters. In this framing, complexity reflects the size of the parameterization and its associated computational footprint, but not the structure of the learned parameters themselves --- although the objective of most compression techniques is precisely that of finding redundancies in structure and repeatability. This raises the central question of \emph{Why are models suited for compression?}

\begin{wrapfigure}{r}{0.42\textwidth}
\vspace{-1.1em}
\centering
\includegraphics[width=0.20\textwidth]{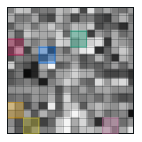}
\hfill
\includegraphics[width=0.20\textwidth]{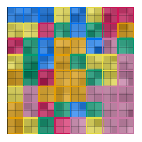}
\vspace{0.3em}
\caption{An unconstrained  \textbf{(left)} $16 \times 16$ parameterization with randomly selected motifs highlighted. MoMos parameterization \textbf{(right)} enforcing $2\times2$ motif reuse.}
\label{fig:momo}
\vspace{-1.0em}
\end{wrapfigure}

From algorithmic complexity theory, the complexity of an object is the length of its shortest generating program~\citep{solomonoff1960preliminary,kolmogorov1965three, chaitin1977algorithmic}. Programs refer to descriptions of procedures. We hypothesize that optimization generally increases algorithmic regularity, so trained models admit shorter descriptions than (at) random initialization. In our analysis, the object is not the dataset or the learned function, but the \emph{parameterization} itself --- that is, the weight matrix, and more specifically its structure rather than the specific numerical values. This differs from prior approaches, such as weight-sharing~\citep{ullrich2017soft} and quantization~\citep{hubara2016binarized, han2016deepcompressioncompressingdeep} that primarily constrain the space of parameter values. Interestingly, we theoretically show that the optimization domain can be shaped to induce redundancy with minimal performance impact, yielding more compressible parameters than in the unconstrained setting. This redundancy is illustrated in Fig.~\ref{fig:momo} (left) for a weight matrix at random initialization, which, from the perspective of algorithmic complexity, only has a near-trivial description. On the other hand, the weight matrix illustrated in Fig.~\ref{fig:momo} (right) admits a shorter description, as it uses repeating blocks of patterns resulting in algorithmically simpler objects.

Our goal is to impose such simplicity by constraining on repeatability and reuse in parameter space. By restricting the hypothesis class, we show that algorithmic simplicity can be induced, yielding provably shorter worst-case descriptions and admitting compressible models in practice.

From this perspective, our contributions are:

\begin{itemize} 
    \item We present \emph{Mosaic-of-Motifs (MoMos)}, which is a constrained parameterization of model weights; partitioning parameters into blocks of size $s$, it restricts each block to be selected from a set of $k$ reusable motifs, specified by a reuse pattern $\mathcal{M}$, referred to as a \emph{mosaic}.
    
    \item We show that MoMos parameterization guarantees lower worst-case algorithmic complexity, controlled by $s$ and $k$, and find that the corresponding bound is conservative  in practice under the observed learning dynamics, showing directions for future work.

    \item Using proxies of Kolmogorov Complexity, we empirically show that MoMos drives learned weights toward more algorithmically regular, more compressible, parameterizations than the unconstrained baselines with little to no drops in performance.
\end{itemize}

While model compressibility has been studied extensively, to the best of our knowledge, it is the first work to provide guarantees on the algorithmic complexity of learned parameters by constraining the model class. In particular, this work is agnostic to any compression method and instead studies compressibility as a property of the optimization domain induced by model parameterization.
\section{Related work}
It is demonstrated that trained models contain substantial redundancy in their parameter space and are therefore needlessly complex, with complexity characterized by parameter count, compute and memory cost~\citep{nagel2021whitepaperneuralnetwork,han2016deepcompressioncompressingdeep,hinton2015distillingknowledgeneuralnetwork}. Vector quantization~\citep{gray1984vector} approaches exploit this redundancy by constraining the latent space to discrete representations~\citep{van2017neural} or by quantizing weights into fixed vectors and reusing repeated matrix-multiplications to improve downstream efficiency~\citep{gong2014compressing,vali2023stochastic}.
Similarly, redundancies are identified in weight-sharing approaches to cluster weights, reducing the number of distinct parameters and compressing networks~\citep{wu2018deepkmeansretrainingparameter,softweightsharing,ullrich2017soft}. In contrast, our work is closer to weight-space understanding~\citep{han2026survey} than to classical compression methods; compressibility is here measured as an effect rather than optimized as an objective.

Entropy measures have been proposed as proxies for model complexity by measuring the distribution of weights and activation functions, relating it to generalization behavior~\citep{xu2017information,achille2018emergenceinvariancedisentanglementdeep}, and also to guide optimization trajectories~\citep{chaudhari2019entropy}. However, when weight arrangements induce similar distributions, thus similar entropy values, these proxies cannot, in general, capture algorithmic regularities. 

The Minimum Description (MDL) principle argues that the shortest description of data relative to a model class, and compressor is the best model~\citep{RISSANEN1978465,grunwald2004tutorialintroductionminimumdescription,grunwald2008algorithmic}. A distinction to our work is treating the parameterization of model weights as the object whose description length we control, independent of any particular dataset and compressor.

\begin{wrapfigure}{r}{0.42\textwidth}
\vspace{-1.2em}
\centering
\includegraphics[width=\linewidth]{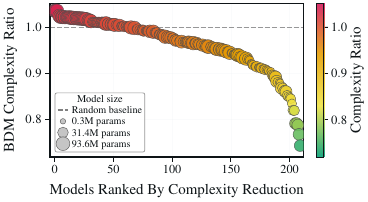}
\vspace{-0.6em}
\caption{Algorithmic complexity reduction for randomly sampled pretrained \textsc{timm} models with up to 100M parameters, measured as the ratio of pretrained to initialized complexity using Eq.~\eqref{eq:bdm}. The 200 models are ordered by increasing complexity reduction.}
\label{fig:bdm_complexity}
\vspace{-1.0em}
\end{wrapfigure}

\paragraph{Algorithmic Complexity of Finite Objects.}
Algorithmic information theory defines complexity as the description length of an object itself. The Kolmogorov complexity~\citep{solomonoff1960preliminary,kolmogorov1965three,
chaitin1977algorithmic} of a finite binary string $\W$, denoted $\mathcal{K}(\W)$, is the length of the shortest program $\mathbf{p}$ that outputs $\W$ and halts, when run on a universal prefix Turing machine. Let $U$ be such a universal prefix Turing machine. For a finite binary string $\W \in \bits^*$, and program $\mathbf{p}$, its Kolmogorov complexity is defined as
\begin{equation}
\mathcal{K}(\mathbf{w}) := \min\{|\mathbf{p}| : U(\mathbf{p})=\mathbf{w}\},
\end{equation}

where $|\mathbf{p}|$ is the bit-length of the program. This computation, however, requires deciding whether all shorter programs halt with output $w$, fail, or run forever, which is undecidable by the halting problem~\citep{davis1982computability}; hence, it is not computable.

While computing the exact Kolmogorov complexity is not tractable, it relates to the algorithmic probability of an object, $\mathcal{P}(\W) = \sum_{\mathbf{p}_{U(\mathbf{p})=\W}}2^{|-\mathbf{p}|}$, through the Coding Theorem~\citep{levin1974laws} by $\mathcal{K}(\W)= -\log\mathcal{P(\W)}+O(1)$; the probability of randomly generating a program that outputs $\W$ and halts. 
For small binary objects, the Coding Theorem Method (CTM)~\citep{zenil2015two} operationalizes this idea, approximating $\mathcal{P}(\W)$ by the output frequency of $\W$ on large-scale simulations of Turing machines~\citep{soler2014calculating}. Simulations are only practical for smaller programs, but are extended to larger objects by the Block Decomposition Method (BDM)~\citep{zenil2018decomposition} by summing the frequencies of the blocks of decomposed objects $\{\mathbf{b_i}\} \in \W$. The Kolmogorov Complexity measure using the BDM is expressed as 
\begin{equation}
    \mathcal{K}_B(\W) := \sum_{i} \mathrm{CTM}(\textbf{b}_i)+\log_2(m_i)
    \label{eq:bdm}
\end{equation}
where $\mathrm{CTM}(\mathbf{b}_i)$ is the CTM estimate of the distinct block $\mathbf{b}_i$, and $m_i$ is its multiplicity in $\W$. Central to this idea is the observation that simple objects occur often and are therefore of low complexity; we note that repeated blocks incur only a logarithmic repetition cost, whereas distinct blocks must be described separately.

These techniques have been used to study regularities in learned representations that are not reflected by statistical measures~\cite{zenil2020algorithmic}, with recent evidence suggesting that CTM-based measures more closely track training dynamics in binarized neural networks~\cite{sakabe2025evaluating}. Furthermore, analysis of gradient-based training suggests that optimization can be biased towards lower complexity and shorter descriptions~\cite {wilson2025deep,goldblum2024position}, which aligns with the idea that reuse reduces description length. Fig.~\ref{fig:bdm_complexity} provides further evidence that models are compressible because training induces repeatable structure in parameter space, yielding shorter descriptions and lower algorithmic complexity as captured by CTM-based complexity.

\section{Mosaic-of-Motifs (MoMos)}\label{sec:methods}
Our approach imposes repeatability by design and reduces the algorithmic complexity of neural network weights, thereby simplifying the model. We show this by giving a uniform, worst-case upper bound on the description length of any parameterization in its hypothesis class. This not only improves the trivial description in Eq.~\eqref{eq:kc_trivial}, but also shows explicitly how the simplification induced by MoMos controls the geometry of the optimization domain.

Let $f_\W$ denote a model parameterized by weights
$\W=\{\W_\ell\}_{\ell=1}^L$, where $\W_\ell \in \R^{n_\ell}$ are structured components such as convolutional kernels or linear layers, and $n=\sum_{\ell=1}^L n_\ell$ is the total number of parameters over $L$ layers. Without loss of generality, we flatten all parameters into a single vector $\W \in \mathbb{R}^n$. Let $q \in \mathbb{N}$ denote the number of bits per weight, then the bit-string representing the weights is given as $\W \in \bits^{nq}$. For any $\W$, there is a trivial description that lists the $nq$ bit-string, which gives the worst-case upper bound
\begin{equation}\label{eq:kc_trivial}
\mathcal{K}(\W) \le nq + O(1) .
\end{equation}
This holds for every $\W$; especially for \emph{unstructured}, i.e., any description requiring at least $|\W|$ bits.

\begin{definition}[MoMos Hypothesis Class]
Choose a block size $s \in \N$ and let $m = \ceil{n / s}$ be the number of blocks\footnote{In practice, we use a deterministic function that maps $\W$ to $\mathbb{R}^{s \cdot m}$ by padding $s\cdot m - n$ trailing zeros.}. Define an invertible slicing operator $\psi : \R^n \to \R^{s\cdot m}$ and $\psi(\W) = (\bb_1,\dots,\bb_m)$ where $\bb_i \in \R^s$ are each blocks of length $s$. The inverse $\psi^{-1}$ satisfies
$\psi^{-1}(\psi(\W)) = \W$, $\forall \W \in \mathbb{R}^n$.
For $1 \le k \le m$, a MoMos parameterization consists of fixed blocks $\mathcal{Z} = (\zz_1,\dots,\zz_k)$ with $\zz_i \in \R^s$ referred to as \emph{motifs}, and a reuse pattern $ \Mcal \in [k]^m$, referred to as a \emph{mosaic} where $[k] = \{1,\dots,k\}$. 

A mosaic decides the arrangement of motifs $\widehat{\bb}_i = \zz_{\Mcal(i)}$ and replaces $\bb_i$ with a motif from $\mathcal{Z}$ for $i \in  [m]$. The \emph{mosaic-of-motifs} then defines all the parameters of a model by
\begin{equation}
 \What =\psi^{-1}(\widehat{\bb}_1,\dots,\widehat{\bb}_m)  
 = \psi^{-1}(\zz_{\Mcal(1)},\dots,\zz_{\Mcal(m)}) \in \R^{n}.
\end{equation}
The hypothesis class induced by the $(s,k)$-MoMos parameterization is
\begin{equation}\label{eq:def_HSK}
    \mathcal{H}_{s,k} = \{\What: \mathbf{z}_i \in \mathbb{R}^s, \; \mathcal{M} \in [k]^m\} \subseteq \mathbb{R}^n ,
\end{equation}
denoting the set of all parameterizations that can be represented as $m$ blocks, each selected from a fixed set of $k$ motifs with their arrangement defined by $\Mcal$, the mosaic.
\end{definition}

In particular, $\mathcal{H}_{s,k}$ restricts the optimization domain by replacing arbitrary blocks with reused motifs, inducing structure. Here, $k$ controls how many motifs are available, while $s$ controls the size of the motifs. When $k=m$, the class can represent arbitrary block choices, whereas a smaller $k$ forces more blocks to share the same motif. One caveat here is that parameter values can still be arbitrary inside each motif, so $\mathcal{H}_{s,k}$ can induce a rich optimization domain while still imposing regularity. We next bound the worst-case description length of any parameterization in $\mathcal{H}_{s,k}$, improving Eq.~\eqref{eq:kc_trivial}. The relative bound of the improvement as $s$ increases with fixed $k$ can be seen in Fig.~\ref{fig:momos_bound}.
\begin{wrapfigure}{r}{0.40\textwidth}
\vspace{-0.0em}
\centering
\includegraphics[width=1\linewidth]{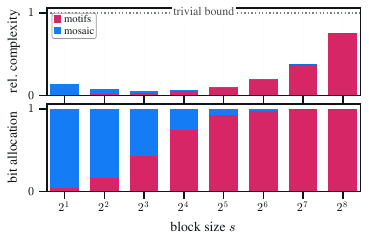}
\vspace{-1.2em}
\caption{The algorithmic complexity makeup attributed to motifs vs. mosaic arrangement. Bit allocation \textbf{(bottom)} share of motifs contributes most of the still lower \textbf{(top)} relative complexity.}
\label{fig:momos_bound}
\vspace{-2.0em}
\end{wrapfigure}

\begin{proposition}[MoMos Complexity Bound]
\label{prop:momo_kc}
There exists a constant $\zeta$, such that 
\begin{equation}
\mathcal{K}(\What) \le ksq + m\ceil{\log_2 k} + \zeta, \quad \forall \ \What \in \mathcal{H}_{s,k}.
\label{eq:momos-comp}
\end{equation}
\end{proposition}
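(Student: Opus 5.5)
The approach is constructive: for any $\What \in \mathcal{H}_{s,k}$, exhibit a single program $\mathbf{p}$ for $U$ that outputs the $nq$-bit representation of $\What$. We then bound $|\mathbf{p}|$, as in the trivial bound of Eq.~\eqref{eq:kc_trivial}. The program is split into a \emph{fixed decoder} $D$, independent of $\What$, and a \emph{data segment} $\delta$, which encodes the motifs and the mosaic. Then $\mathcal{K}(\What) \le |D| + |\delta|$, and everything attributable to $D$ is absorbed into $\zeta$.

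\textbf{Step 1 (data segment).} Fix a representation $(\mathcal{Z},\Mcal)$ of $\What$. Write the $k$ motifs as $ks$ weights of $q$ bits each, listed consecutively; this costs exactly $ksq$ bits. Then write the mosaic $\Mcal \in [k]^m$ as $m$ fixed-width codewords. Each codeword is the binary encoding of $\Mcal(i)-1$ using $\ceil{\log_2 k}$ bits, for a total of $m\ceil{\log_2 k}$ bits. Because all field widths are fixed, no separators are needed once the decoder knows the parameters.

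\textbf{Step 2 (decoder).} $D$ reads $ksq$ bits into motifs $\zz_1,\dots,\zz_k$. It then reads $m$ codewords and, for each $i$, emits $\zz_{\Mcal(i)}$. Finally it applies the deterministic operator $\psi^{-1}$, dropping the $sm-n$ padding entries, and halts with output $\What$. Since $\psi$ is a fixed deterministic slicing and the padding convention is fixed, $\psi^{-1}$ is computable and contributes only to $|D|$.

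\textbf{Step 3 (main obstacle: self-delimitation and dependence on parameters).} $U$ is a prefix machine, so the decoder must know where $\delta$ ends. It also needs the values $n,s,k,q$, and hence $m$. Encoding these self-delimitingly costs $O(\log n + \log s + \log k + \log q)$ bits, which is not a constant in general. My plan is to treat the architecture (and thus $n$, $q$) and the hyperparameters $(s,k)$ as fixed when the class $\mathcal{H}_{s,k}$ is fixed, exactly as the $O(1)$ in Eq.~\eqref{eq:kc_trivial} implicitly does. We then hardcode them into $D$, so $\zeta = |D|$ depends on $(n,s,k,q)$ but not on $\What$. This uniformity over $\mathcal{H}_{s,k}$ is what the statement claims. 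Given these constants, $|\delta| = ksq + m\ceil{\log_2 k}$ is known in advance, so the decoder reads exactly that many bits and the prefix property holds automatically. If one insists on $\zeta$ independent of the parameters, the same argument goes through with $\zeta$ replaced by $\zeta' + O(\log n)$ via a self-delimiting encoding of $(n,s,k,q)$; I would note this as a remark.

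\textbf{Step 4 (edge case).} For $k=1$ we have $\ceil{\log_2 k}=0$, the mosaic is empty, and the decoder simply repeats $\zz_1$ $m$ times; the bound still holds. Combining Steps 1--3 gives $\mathcal{K}(\What) \le |D| + ksq + m\ceil{\log_2 k}$, which proves Eq.~\eqref{eq:momos-comp} with $\zeta = |D|$.
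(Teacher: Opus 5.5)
Your proof is correct and follows the paper's own construction essentially verbatim. Both use a fixed decoder of length $\zeta$, which hardcodes $n,s,k,q$ and so does not depend on $\What$. The decoder reads $ksq$ bits of $q$-bit motif encodings, then $m\ceil{\log_2 k}$ bits of fixed-width codes for $\Mcal(i)-1$, assembles the blocks and drops the padding. Your explicit remarks on prefix-freeness (via the predetermined input length) and on the $k=1$ case spell out points the paper only glosses.
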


This is straightforward to prove, as shown in Sec.~\ref{app:proof_upper_momo}, where we do this by construction of a program that reconstructs $\What$ from a set of motifs and a mosaic.

\begin{corollary}[Lower Complexity Regime]
\label{cor:nonvacuous}
Whenever $ksq + m\ceil{\log_2 k} < nq$, we have
\begin{equation}
\mathcal{K}(\widehat{\W}) < nq + \zeta, \quad \What \in \mathcal{H}_{s,k}
\end{equation}
In particular, the description in Prop.~\ref{prop:momo_kc} is strictly shorter than the trivial description of length $nq$ bits, up to the same additive constant, $\zeta$.
\end{corollary}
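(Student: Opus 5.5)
The corollary follows directly from Prop.~\ref{prop:momo_kc}, with no new construction needed. I would fix an arbitrary $\What \in \mathcal{H}_{s,k}$ and invoke Eq.~\eqref{eq:momos-comp}, which gives $\mathcal{K}(\What) \le ksq + m\ceil{\log_2 k} + \zeta$. Here $\zeta$ is the constant from the proposition. It depends only on the fixed reconstruction program, that is, on the decoder that reads $k$ motifs of $sq$ bits each, then $m$ indices of $\ceil{\log_2 k}$ bits each, and applies $\psi^{-1}$. In particular, $\zeta$ does not depend on $\What$. Under the hypothesis $ksq + m\ceil{\log_2 k} < nq$, adding $\zeta$ to both sides and chaining the inequalities gives $\mathcal{K}(\What) < nq + \zeta$, as claimed.

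For the ``in particular'' statement, I would compare description lengths. The two-part code of Prop.~\ref{prop:momo_kc} (motifs followed by mosaic) has payload length $ksq + m\ceil{\log_2 k}$, while the trivial code behind Eq.~\eqref{eq:kc_trivial} has payload length $nq$. The hypothesis says exactly that the first payload is strictly shorter. To put both on equal footing, I would take a single constant $\zeta$ that dominates both decoders' overheads, for example the maximum of the two constants. Then both bounds read ``payload $+\,\zeta$'', and the strict inequality between the payloads carries over.

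The only step needing care is this bookkeeping of additive constants. The trivial bound carries its own $O(1)$ term, and strictness only makes sense once the constant is shared. I also need $n, s, k, q$ to be treated as known to the decoder, so that $\zeta$ is uniform over $\mathcal{H}_{s,k}$. If they are not, the overhead picks up an $O(\log n)$ term for encoding the parameters, which the paper implicitly absorbs into $\zeta$ for a fixed architecture. Once that convention is stated, the argument is a one-line chaining of inequalities.
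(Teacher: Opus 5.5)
Your proposal is correct and matches the paper. The paper gives no separate proof of the corollary and treats it as an immediate consequence of Prop.~\ref{prop:momo_kc}; chaining $\mathcal{K}(\What) \le ksq + m\ceil{\log_2 k} + \zeta < nq + \zeta$ is exactly the intended argument, and your remarks on sharing one additive constant with the trivial bound and on $n,s,k,q$ being fixed for the decoder agree with the paper's construction.
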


\paragraph{Granularity and Repeatability}
We are interested in the regime where $k \ll m$, since extensive motif reuse imposes regularity and yields parameterizations that are easier to describe algorithmically. Each motif of size $s$ controls the granularity of reuse. Smaller motifs allow a richer optimization domain, as illustrated in Fig.~\ref{fig:momos_domain}, whereas larger motifs enforce greater regularity and a less rich optimization domain. However, a similar function can be realized in very different areas of the optimization domain~\cite{dinh2017sharp}. Each motif is the unit of reuse and can otherwise take arbitrary values, yet in practice we enforce $\zz_1 = \mathbf{0}$, such that repeating $\zz_1$ can induce more sparsity; however, we observe a less than 1\% sparsity across all configurations and settings. This suggests that the reduction in algorithmic complexity and model compressibility is not naturally driven by learning dynamics that lead to sparsification discussed in Sec.~\ref{sec:sparsity}.

\subsection{Geometry of the Constrained Optimization Domain}
\begin{wrapfigure}{r}{0.40\textwidth}
\vspace{-2.1em}
\centering
\includegraphics[width=1\linewidth]{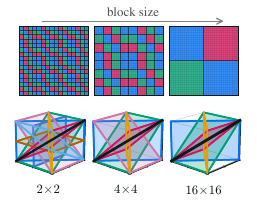}
\vspace{-1em}
\caption{The effect of block size granularity on domain geometry. Larger \textbf{(top)} blocks subsume the flexibility of local patterns impacting the richness of the \textbf{(bottom)} optimization subspaces of the domain; algorithmic simplification is exchanged for function multiplicity.}
\label{fig:momos_domain}
\end{wrapfigure}

We have seen how MoMos enforces regularity using repeated motifs. Its constraints do not directly change the parameter space $\mathbb{R}^n$; however, they restrict the subset over which optimization is performed. These restrictions affect the \emph{richness} of the resulting optimization domain, illustrated in Fig.~\ref{fig:momos_domain}, which we examine through a decomposition of its induced geometry.

Fix a mosaic $\Mcal \in [k]^m$, define the slice of $H_{s,k}(\Mcal) \subseteq \mathbb{R}^n $ as the set of weights whose block decomposition $\psi(\What) = (\widehat{\bb}_1, \ldots, \widehat{\bb}_m)$ satisfies
\begin{equation}
    \widehat{\bb}_i = \widehat{\bb}_j \quad \text{whenever} \quad \Mcal(i) = \Mcal(j).
\end{equation}
That is, blocks with the same label under $\Mcal$ are equal.

\begin{lemma}[Optimization Subspaces]\label{lemma:linear_component}
Fix $\Mcal \in [k]^m$ and let $k_\Mcal = |\{\Mcal(1),\ldots,\Mcal(m) \}|$ be the number of distinct block variables used by $\Mcal$. Then $\mathcal{H}_{s,k}(\Mcal) \subseteq \mathbb{R}^n$ is a linear subspace. If $s \mid n$, then
\begin{equation}
    \dim(\mathcal{H}_{s,k}(\Mcal)) = sk_\Mcal \le sk .
\end{equation}
    In general, $\dim(\mathcal{H}_{s,k}(\Mcal)) \le sk_\Mcal \le sk$ when $s \nmid n$. Proof in Sec.~\ref{app:linear_component}.
\end{lemma}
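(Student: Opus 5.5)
We describe $\mathcal{H}_{s,k}(\Mcal)$ as the image of an injective linear map, or as the preimage of a linear subspace under the invertible operator $\psi$. We treat $\psi$ as a linear map. It is the coordinate slicing followed by zero padding, so $\psi(\W) = (\W, \mathbf{0}) \in \R^{sm}$ reindexed into blocks. Then $\psi$ is linear and injective, and $\psi^{-1}$ on its image is the truncation to the first $n$ coordinates.

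\textbf{Linearity.} Define the set
\begin{equation}
V_\Mcal = \{(\bb_1,\dots,\bb_m)\in\R^{sm} : \bb_i=\bb_j \text{ whenever } \Mcal(i)=\Mcal(j)\}.
\end{equation}
This set is cut out by the homogeneous linear equations $\bb_i-\bb_j=\mathbf{0}$, so it is a linear subspace of $\R^{sm}$. By definition $\mathcal{H}_{s,k}(\Mcal)=\psi^{-1}(V_\Mcal)$, the preimage under the linear map $\psi:\R^n\to\R^{sm}$. Preimages of subspaces under linear maps are subspaces, which settles the first claim.

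\textbf{Dimension.} Let $L=\{\Mcal(1),\dots,\Mcal(m)\}$, so $|L|=k_\Mcal$. Consider the linear map
\begin{equation}
\Phi:(\R^s)^{L}\to\R^{sm}, \qquad (\zz_\ell)_{\ell\in L}\mapsto(\zz_{\Mcal(1)},\dots,\zz_{\Mcal(m)}).
\end{equation}
Each label $\ell\in L$ occurs at some index $i$, and the block at that index recovers $\zz_\ell$. Hence $\Phi$ is injective, and its image is exactly $V_\Mcal$. Therefore $\dim V_\Mcal = sk_\Mcal$, and $k_\Mcal\le k$ because $L\subseteq[k]$.

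If $s\mid n$, then $sm=n$ and $\psi$ is a bijective linear map, i.e.\ a coordinate permutation or the identity. It follows that $\dim\mathcal{H}_{s,k}(\Mcal)=\dim V_\Mcal=sk_\Mcal$.

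If $s\nmid n$, then $\psi$ is injective, so $\psi$ restricts to an isomorphism from $\mathcal{H}_{s,k}(\Mcal)$ onto $V_\Mcal\cap\operatorname{im}\psi$. The image of $\psi$ is the subspace where the $sm-n$ padding coordinates vanish. Hence
\begin{equation}
\dim\mathcal{H}_{s,k}(\Mcal)=\dim(V_\Mcal\cap\operatorname{im}\psi)\le \dim V_\Mcal = sk_\Mcal\le sk.
\end{equation}
The inequality can be strict. For example, the padding coordinates in the last block force the corresponding entries of its motif to zero, and through reuse also in every block sharing that label.

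\textbf{Main obstacle.} The only delicate point is the non-divisible case: one must read $\psi^{-1}$ as defined on $\operatorname{im}\psi$ and make the padding convention explicit. With that convention, injectivity of $\psi$ gives the inequality directly.
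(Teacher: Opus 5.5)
Your proof is correct, and its core is the same as the paper's. Both parameterize the slice by the $k_\Mcal$ used motifs, and both get injectivity by picking, for each used label, a block position that carries it.

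The difference is in which set you call the slice. You follow the main-text wording literally and take the preimage $\{\W : \psi(\W)\in V_\Mcal\}$ under the zero-padding $\psi$. The paper's appendix instead takes $\mathcal{H}_{s,k}(\Mcal)=\mathrm{Im}(F_\Mcal)$, where $F_\Mcal(z)=\psi^{-1}(\zz_{\Mcal(1)},\dots,\zz_{\Mcal(m)})$ and $\psi^{-1}$ is the truncation $T:\R^{sm}\to\R^n$ applied to arbitrary motif tuples. So the paper's slice is $T(V_\Mcal)$, and linearity and the upper bound both come from its being the image of a linear map. The paper treats the two descriptions as equivalent, but that holds only when $s\mid n$.

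For $s\nmid n$ the two sets differ. Yours is $T(V_\Mcal\cap \mathrm{im}\,\psi)\subseteq T(V_\Mcal)$ and has dimension exactly $sk_\Mcal-(sm-n)$. By contrast, $T(V_\Mcal)$ has dimension $sk_\Mcal$ whenever the label $\Mcal(m)$ is reused elsewhere. For instance, take $n=3$, $s=2$, $k=1$, $\Mcal=(1,1)$: your slice is $\{(a,0,a)\}$, while the paper's is $\{(a,b,a)\}=\mathcal{H}_{2,1}$.

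Both sets satisfy the lemma's inequality, so nothing in your argument breaks. However, your closing remark that one \emph{must} read $\psi^{-1}$ as defined only on $\mathrm{im}\,\psi$ conflicts with how the paper uses $\psi^{-1}$ to define $\What$ and $\mathcal{H}_{s,k}$. Under your reading, the decomposition $\mathcal{H}_{s,k}=\bigcup_\Mcal\mathcal{H}_{s,k}(\Mcal)$ of Prop.~\ref{prop:union_subspaces} fails in the example above. You can make your proof cover the paper's reading as well by adding one line: $T(V_\Mcal)$ is a linear image of $V_\Mcal$, hence a subspace of dimension at most $\dim V_\Mcal=sk_\Mcal$.
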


\begin{proposition}[Domain Decomposition.]
\label{prop:union_subspaces}
The MoMos hypothesis class decomposes as
\begin{equation}
\mathcal{H}_{s,k} = \bigcup_{\Mcal \in [k]^m} \mathcal{H}_{s,k}(\Mcal).
\end{equation}
Since $|[k]^m| = k^m$, $\mathcal{H}_{s,k}$ is a union of at most $k^m$ linear components, each of dimension at most $sk$.
\end{proposition}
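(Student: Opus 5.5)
The plan is to prove the set equality by double inclusion and then read off the counting statement from Lemma~\ref{lemma:linear_component}.

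For ``$\subseteq$'', take any $\What \in \mathcal{H}_{s,k}$. By the definition in Eq.~\eqref{eq:def_HSK}, there exist motifs $\zz_1,\dots,\zz_k \in \R^s$ and a mosaic $\Mcal \in [k]^m$ with $\What = \psi^{-1}(\zz_{\Mcal(1)},\dots,\zz_{\Mcal(m)})$. Invertibility of $\psi$ gives $\psi(\What) = (\zz_{\Mcal(1)},\dots,\zz_{\Mcal(m)})$, so $\widehat{\bb}_i = \zz_{\Mcal(i)}$. Whenever $\Mcal(i)=\Mcal(j)$, we therefore have $\widehat{\bb}_i = \widehat{\bb}_j$, which means $\What \in \mathcal{H}_{s,k}(\Mcal)$.

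For ``$\supseteq$'', take $\W \in \mathcal{H}_{s,k}(\Mcal)$ for some $\Mcal$, and write $\psi(\W) = (\bb_1,\dots,\bb_m)$. For each label $c \in [k]$ in the image of $\Mcal$, set $\zz_c := \bb_i$ for any $i$ with $\Mcal(i)=c$. This is well defined precisely because the slice condition forces all such blocks to be equal. For labels not used by $\Mcal$, set $\zz_c$ arbitrarily, e.g.\ $\mathbf{0}$. Then $\zz_{\Mcal(i)} = \bb_i$ for every $i$, so $\W = \psi^{-1}(\psi(\W)) = \psi^{-1}(\zz_{\Mcal(1)},\dots,\zz_{\Mcal(m)}) \in \mathcal{H}_{s,k}$.

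The delicate step is the padding convention when $s \nmid n$. There, $\psi$ is invertible only on its image, and the last block carries forced trailing zeros. Motifs in $\mathcal{H}_{s,k}$ are unconstrained, so I will read $\psi^{-1}$ as ``concatenate and truncate to the first $n$ coordinates'', the left inverse from the footnote. With this reading the first inclusion still goes through: if $\Mcal(i)=\Mcal(m)$, the block $\widehat{\bb}_i$ may differ from the padded last block $\widehat{\bb}_m$ in the truncated coordinates. To handle this, I will define the slice condition on the truncated blocks, or equivalently compare coordinates only within the first $n$. Alternatively, I will note that the same issue is exactly why Lemma~\ref{lemma:linear_component} gives only an inequality, and state the decomposition with $\mathcal{H}_{s,k}(\Mcal)$ understood as the image of the linear map $(\zz_1,\dots,\zz_k)\mapsto \psi^{-1}(\zz_{\Mcal(1)},\dots,\zz_{\Mcal(m)})$. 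That image is a linear subspace, it coincides with the slice when $s \mid n$, and it makes both inclusions immediate.

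Finally, the index set $[k]^m$ has exactly $k^m$ elements. Distinct mosaics may produce the same subspace, for instance under relabelings of motifs, so the union has at most $k^m$ distinct components. Lemma~\ref{lemma:linear_component} then gives that each component is a linear subspace of dimension at most $sk_\Mcal \le sk$, which completes the claim.
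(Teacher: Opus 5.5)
Your proof is correct and follows the paper's route. The paper's argument is that every $\What$ arises from some mosaic and some motifs, that fixing $\Mcal$ and varying the motifs gives $\mathcal{H}_{s,k}(\Mcal)$ (the image of $F_\Mcal$, as in the proof of Lemma~\ref{lemma:linear_component}), and that the count $k^m$ together with the lemma's bound $sk_\Mcal \le sk$ finishes the claim. Your padding caveat goes beyond the paper and is valid: for $s \nmid n$ the literal slice definition breaks the ``$\subseteq$'' direction, for example $n=3$, $s=2$, $k=1$, $\What=(a,b,a)$ with $b\neq 0$, so the image reading you adopt is the right one. Correct the slip ``the first inclusion still goes through'', which should say that this inclusion fails under the slice definition.
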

\begin{proof}
Every $\What \in \mathcal{H}_{s,k}$ corresponds to some mosaic $\Mcal\in[k]^m$ and motif values; fixing $\Mcal$ and varying the motifs gives $\mathcal{H}_{s,k}(\Mcal)$. Taking the union over all $\Mcal$ yields the decomposition, since $|[k]^m|=k^m$ and each $\mathcal{H}_{s,k}(\Mcal)$ has dimension at most $sk$ by Lem.~\ref{lemma:linear_component}.
\end{proof}

\label{sec:distinct}
Decomposing the optimization domain, we note that distinct mosaics do not necessarily define distinct subspaces; the symbols $[k]$ are only labels for shared (repeated) motifs. Although relabeling the symbols changes $\Mcal$, it can leave the induced structure unchanged. For instance, take $m=4, k=2, \Mcal=(1,2,1,2)$ and $\Mcal' = (2,1,2,1)$. Here the mosaics are different, yet both impose $\widehat{\bb}_1 = \widehat{\bb}_3$ and $\widehat{\bb}_2 = \widehat{\bb}_4$, hence $\mathcal{H}_{s,2}(\Mcal) = \mathcal{H}_{s,2}(\Mcal')$. To avoid overcounting mosaics that differ only by relabeling $[k]$, we consider each mosaic $\Mcal$ by the shared block positions it induces.

Assume $s \mid n$ so $m = n/s$. A mosaic $\Mcal \in [k]^m$ assigns to each block position $i \in [m]$ a motif index $\Mcal(i) \in [k]$. For each $j \in [k]$, define
\begin{equation}
    P_j(\Mcal) := \{i \in [m] : \Mcal(i) = j \}.
\end{equation}
This is the set of block positions that use index $j$. Some indices may be unused, so we only keep non-empty classes:
\begin{equation}
    \mathcal{P}(\Mcal) = \{P_j(\Mcal) : P_j(\Mcal) \neq \emptyset \} .
\end{equation}
Then $\mathcal{P}(\Mcal)$ is a partition of $[m]$ into $k_\Mcal$ parts, specifying which blocks are constrained to be equal.

\begin{proposition}[Distinctness Criterion of Mosaics]\label{prop:distinctness}
    Assume $s \mid n$. For any $\Mcal, \Mcal' \in  [k]^m$,
    \begin{equation}
        \mathcal{H}_{s,k}(\Mcal) = \mathcal{H}_{s,k}(\Mcal') \iff \mathcal{P}(\Mcal) = \mathcal{P}(\Mcal').
    \end{equation}
    For intuition, note the slice $\mathcal{H}_{s,k}(\Mcal)$ is determined by which equalities $\widehat{\bb}_i = \widehat{\bb}_j$ are enforced; $\mathcal{P}(\Mcal)$ encodes the same equalities without labels. Proof in Appx.~\ref{app:distinctness_criterion}.
\end{proposition}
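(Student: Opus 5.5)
The plan is to prove both directions through a single intermediate characterization. With $s \mid n$, the slicing operator $\psi$ is a linear bijection $\R^n \to (\R^s)^m$, so the slice can be written as
\begin{equation*}
\mathcal{H}_{s,k}(\Mcal) = \psi^{-1}\bigl(\{(\bb_1,\dots,\bb_m) : \bb_i = \bb_j \text{ whenever } i \sim_\Mcal j\}\bigr),
\end{equation*}
where $i \sim_\Mcal j$ means $\Mcal(i)=\Mcal(j)$. This is the equivalence relation on $[m]$ whose classes are exactly the parts of $\mathcal{P}(\Mcal)$. The whole argument then reduces to showing that the subspace determines this equivalence relation and, conversely, that the relation determines the subspace.

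\emph{($\Leftarrow$)} Suppose $\mathcal{P}(\Mcal)=\mathcal{P}(\Mcal')$. Then $\Mcal(i)=\Mcal(j)$ holds iff $i,j$ lie in a common part of $\mathcal{P}(\Mcal)$. The same is true for $\Mcal'$, so the relations $\sim_\Mcal$ and $\sim_{\Mcal'}$ coincide. The defining constraint sets of the two slices are therefore literally identical, and so the slices are equal. This direction needs nothing beyond unwinding definitions.

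\emph{($\Rightarrow$)} Here the goal is to recover $\sim_\Mcal$ from the set $\mathcal{H}_{s,k}(\Mcal)$ alone. The claim to establish is
\begin{equation*}
i \sim_\Mcal j \iff \bb_i = \bb_j \text{ for every } \What \in \mathcal{H}_{s,k}(\Mcal), \text{ where } \psi(\What)=(\bb_1,\dots,\bb_m).
\end{equation*}
The forward implication is immediate from the definition of the slice. For the converse, take $i \not\sim_\Mcal j$ and exhibit a witness. Fix a nonzero $\mathbf{v}\in\R^s$; this is possible because $s\ge 1$. Assign to every block in the part of $\mathcal{P}(\Mcal)$ containing $i$ the value $\mathbf{v}$, and assign $\mathbf{0}$ to all other blocks. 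The resulting tuple satisfies all constraints of $\sim_\Mcal$, because it is constant on each part. Applying $\psi^{-1}$ gives an element of $\mathcal{H}_{s,k}(\Mcal)$ with $\bb_i=\mathbf{v}\neq\mathbf{0}=\bb_j$. Here the assumption $s\mid n$ matters: $\psi$ is then a genuine bijection with no padding coordinates forced to zero, so every tuple in $(\R^s)^m$ is realized by some $\What$. Hence equal slices induce equal relations $\sim_\Mcal=\sim_{\Mcal'}$. Two partitions of $[m]$ with the same equivalence relation are equal, which gives $\mathcal{P}(\Mcal)=\mathcal{P}(\Mcal')$.

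The only step requiring care is the witness construction in ($\Rightarrow$). There are two points to check: the constructed tuple genuinely lies in the slice, which holds because it is constant on each part, and surjectivity of $\psi$, which relies on $s\mid n$. Without divisibility, padded zero coordinates could collapse distinctions. For example, the last partial block could be forced equal to a block whose visible entries all vanish. That is why the proposition carries the assumption, consistent with the inequality-only statement in Lem.~\ref{lemma:linear_component} for $s\nmid n$.
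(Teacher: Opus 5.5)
Your proof is correct and takes the same route as the paper. For $(\Leftarrow)$ you observe that the slice depends only on the induced block equalities, and for $(\Rightarrow)$ you exhibit a weight vector that satisfies one mosaic's equalities but not the other's; you organize this a bit more cleanly by recovering $\sim_\Mcal$ intrinsically from $\mathcal{H}_{s,k}(\Mcal)$, using an explicit witness that is constant on one part. One correction to your closing aside: divisibility is only a convenience, because choosing $\mathbf{v}=(1,0,\dots,0)$ keeps your witness in the image of the zero-padded $\psi$ even when $s\nmid n$, so padding lowers the dimension in Lem.~\ref{lemma:linear_component} but does not collapse distinct partitions.
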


This notion of distinctness determines the \emph{richness} of the optimization domain. Following the previous example, $\mathcal{P}(\Mcal) = \mathcal{P}(\Mcal') = \{\{1,3 \}, \{2,4 \} \}$, so although the number of mosaics is bounded by $k^m = 2^4 = 16$, there are less distinct subspaces in the optimization domain.

\begin{definition}[Distinct Subspaces in Mosaics]
    Assume $s \mid n$. Let $m = n/s$. For a mosaic $\Mcal \in [k]^m$, there are $k^m$ components, \emph{some} of which induce the same subspace. Let $\alpha(m,k)$ denote the number of distinct subspaces, 
    \begin{equation}
        \alpha(m,k) := |\{\mathcal{H}_{s,k}(\Mcal) : \Mcal \in [k]^m \}|.
    \end{equation}
    Let $\mathrm{Stir}(m,i)$ denote the Stirling number of the second kind, i.e., the number of ways to partition a set of $m$ objects into $i$ non-empty subsets~\citep{moser1958stirling}. Then by Prop.~\ref{prop:distinctness}, distinct subspaces correspond to partitions of $\{1,\dots,m\}$ into at most $k$ subsets, hence for $k \ge 2$,
    \begin{equation}
        \alpha(m,k) = \sum_{i=1}^k \mathrm{Stir}(m,i) < k^m .
    \end{equation}
    Moreover, $\alpha(m,k)$ is non-decreasing in both $m$ and $k$, so increasing the number of blocks $m$, or allowing more motifs $k$, cannot reduce the number of distinct components.
\end{definition}

\paragraph{Richness of The Optimization Domain.}\label{opt:rich} 
For fixed $(s,k)$, different mosaics $\Mcal \in [k]^m$ impose different sets of equalities for blocks $\widehat{\bb}_i = \widehat{\bb}_j$ across the $m$ block positions, thereby inducing distinct linear components $\mathcal{H}_{s,k}(\Mcal)$. Specifically, $\alpha(m,k)$ counts the number of such components that are distinct and how it enriches the optimization domain with more functional degrees of freedom.

To see the effect of $k$ on the optimization domain, note that increasing $k$ cannot decrease $\alpha(m,k)$, because it permits additional partitions of $[m]$. Decreasing $s$ increases $m = \ceil{n/s}$, which increases the number of possible linear components in the optimization landscape. For $m \ge 2$ and $k \ge 2$ we have $\alpha(m,k) \ge \Stir(m,1) + \Stir(m,2) = 2^{m-1}$, so the number of distinct components grows exponentially in $m$. To isolate this growth independent of the motif budget, assume $s \mid n$ and let $s\cdot k = c'$ be a constant factor. Then $\alpha(m,k) = \alpha({n}/{s}, {c'}/{s}) =: \beta(s)$ is a function of the block size.

\begin{figure}[htb]
    \centering
    \includegraphics[width=1\linewidth]{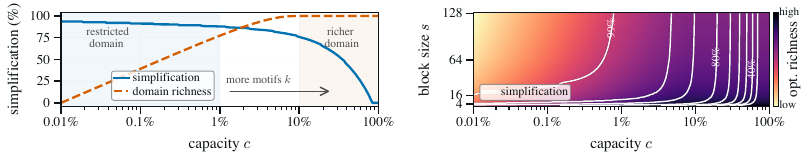}
    \caption{Trade-off between algorithmic simplification and richness of the optimization domain for Tiny-ViT~\citep{tinyvit} using MoMos. For a fixed block size $s=4$ \textbf{(left)}, capacity $c=k/m$ is the fraction of blocks retained as motifs; increasing $c$ enriches the optimization domain while reducing the amount of algorithmic simplification. Across block sizes $s$ \textbf{(right)}, darker colors indicate a richer optimization domain, and white contours mark fixed areas of simplification.}
    \label{fig:simplication}
\end{figure}

\begin{proposition}[Motif Granularity Increase Richness]\label{prop:fixed_budget}
    Fix $n, c'$ and let $s_1, s_2 \in \mathbb{N}$ with $s_1 \mid n$ and $s_2 \mid n$ and $s_1 \le s_2$. Then,
    \begin{equation}
        \beta(s_1) \ge \beta(s_2) .
    \end{equation}
    That is, $\beta(s)$ is monotonically decreasing in $s$.
\end{proposition}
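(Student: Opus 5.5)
The plan is to reduce the statement to the two monotonicity properties of $\alpha(m,k)$ asserted in the definition of distinct subspaces, and to prove those properties directly. Set $m_j = n/s_j$ and $k_j = c'/s_j$ for $j=1,2$. I assume $c'/s_j \in \N$, i.e.\ $s_j \mid c'$; otherwise I would read $k_j$ as $\floor{c'/s_j}$, and every step below still goes through. Since $s_1 \le s_2$, we get $m_1 \ge m_2$ and $k_1 \ge k_2$. The proof then goes through the two-step chain
\begin{equation*}
\beta(s_1) = \alpha(m_1,k_1) \ge \alpha(m_2,k_1) \ge \alpha(m_2,k_2) = \beta(s_2).
\end{equation*}
The first inequality uses monotonicity in $m$ at fixed $k$, and the second uses monotonicity in $k$ at fixed $m$.

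Monotonicity in $k$ is immediate from the formula $\alpha(m,k) = \sum_{i=1}^k \Stir(m,i)$. Every Stirling number is nonnegative, so raising $k$ only adds nonnegative terms. This includes the case $k > m$, where the extra terms $\Stir(m,i)$ with $i>m$ vanish. Combinatorially, it is just the inclusion of the set of partitions of $[m]$ into at most $k_2$ parts inside the set of partitions into at most $k_1$ parts. By Prop.~\ref{prop:distinctness}, these partitions are exactly the distinct subspaces.

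Monotonicity in $m$ is the only step that needs an argument, and I would prove it with an injection. It suffices to treat $m \to m+1$ and then iterate. Send a partition $\mathcal{P}$ of $[m]$ into at most $k$ parts to the partition $\mathcal{P}'$ of $[m+1]$ obtained by adding $m+1$ to the part that contains $m$. The number of parts is unchanged, so $\mathcal{P}'$ again has at most $k$ parts. The map is injective because deleting $m+1$ from $\mathcal{P}'$ recovers $\mathcal{P}$. Hence $\alpha(m,k) \le \alpha(m+1,k)$, and by induction $\alpha(m_2,k) \le \alpha(m_1,k)$ whenever $m_2 \le m_1$.

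The main obstacle is therefore not a computation but the bookkeeping. I need to make sure that both $m_j$ and $k_j$ are integers (or are rounded consistently), and that the case $k_j > m_j$ causes no trouble. Since $\Stir(m,i)=0$ for $i>m$, it does not. With that settled, combining the two monotonicity facts through the displayed chain gives $\beta(s_1)\ge\beta(s_2)$.
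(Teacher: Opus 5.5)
Your proposal is correct and follows the paper's proof: the paper also sets $m_i = n/s_i$ and $k_i = c'/s_i$ (explicitly assuming $s_i \mid c'$) and runs the same chain $\alpha(m_1,k_1) \ge \alpha(m_2,k_1) \ge \alpha(m_2,k_2)$. The difference is that the paper simply invokes the monotonicity of $\alpha$ in $m$ and $k$, which it only asserts in the definition of distinct subspaces. You actually prove it, using nonnegativity of Stirling numbers for $k$ and the injection that adds $m+1$ to the part containing $m$ for $m$, so your version is more self-contained.
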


\begin{proof}\label{proof:fixed_budget}
    Let $m_i = n/s_i$ and $k_i = c'/s_i$ with $s_i \mid c'$ for $i \in \{1,2 \}$. Since $s_1 \le s_2$, we have $m_1 \ge m_2$ and $k_1 \ge k_2$. Because $\alpha(m,k)$ is increasing in both $m$ and $k$
\begin{equation}
    \alpha(m_1,k_1) \ge \alpha(m_2, k_1) \ge \alpha(m_2, k_2) .
\end{equation}
Substituting $\beta(s_i) = \alpha(m_i,k_i)$ gives $\beta(s_1) \ge \beta(s_2)$.
\end{proof}

\section{Evaluating Characteristics of MoMos Parameterization}
\label{sec:experiments}
We empirically characterize how parameters of the MoMos hypothesis class affect performance and compressibility, focusing on whether accuracy varies predictably with algorithmic complexity as constraints tighten. Our experiments span ResNet20~\citep{resnet}, Tiny-ViT~\citep{tinyvit}, and a Multi-Layered Perceptron (MLP) with 5 hidden layers of 256 neurons per layer, on CIFAR-10~\citep{cifar10}. We also scale to more recent models on different tasks by examining MoMos on models Swin-B~\citep{liu2021swin} on Tiny ImageNet~\citep{le2015tiny} and SmolLM2-360M~\citep{allal2025smollm2} on WikiText-103~\citep{merity2016pointer}. Finally, we also compare against five baseline methods (some) that are similar in spirit, but not necessarily model-agnostic, similar to MoMos. Full experimental setup is detailed in Appx.~\ref{app:exp_setup}.

In addition to standard performance metrics, we report and denote by $r_s$ the relative algorithmic simplification (RAS) as the ratio between the upper bound description length for unconstrained optimization (Eq.~\ref{eq:kc_trivial}) and the MoMos parameterization (Eq.~\ref{eq:momos-comp}), ignoring constants. We also measure the algorithmic complexity of trained neural network weights ($\W^*$) compared to randomly initialized weights ($\W_0$), denoted as $r_\mathrm{BDM}$, as measured in Eq.~\eqref{eq:bdm}. We report the RAS of baseline methods from each method's own deployed representation. These are defined as follows:
\begin{equation}
r_\mathrm{BDM} := 
\frac{\mathcal{K}_B(\W^*)}{\mathcal{K}_B(\W_0)} , ~~~~  
r_s : \approx 
\frac{nq}{ksq + m\ceil{\log_2 k}} .
\label{eq:rac}
\end{equation}
Due to the practical limits of finite-Turing machine simulations we discuss in Sec.\ref{sec:limitations}, $r_{\mathrm{BDM}}$ becomes uninformative for high-dimensional models, collapsing toward entropy, as noted in~\cite{sakabe2025evaluating}. We therefore use RAS and relative motif capacity $c=k/m$ to compare across model scales. This changes the requested capacity by at most $1/m$, since $k=\floor{c\cdot m}$ implies $c-(k/m)<1/m$. The main experimental results can be seen in Tab.~\ref{tab:results_main} and additional details in Appx.~\ref{app:baseline_details}.

\begin{wrapfigure}{r}{0.36\textwidth}
\vspace{-1.3em}
\centering
\includegraphics[width=\linewidth]{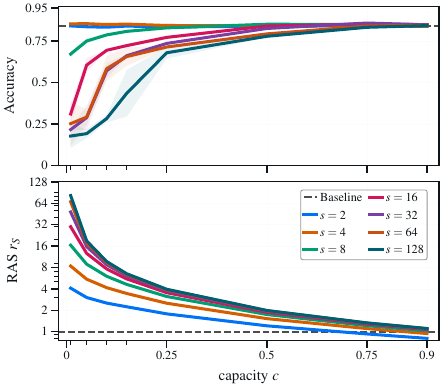}
\vspace{-1.4em}
\caption{Ablation over capacity and block size for Tiny-ViT on \textbf{(top)} test accuracy and \textbf{(bottom)} RAS.}
\label{fig:ablation_vit}
\vspace{-1.2em}
\end{wrapfigure}

\paragraph{Redundancy as a Learning Constraint.}
Fig.~\ref{fig:ablation_vit} isolates the effects of block size $s$ and capacity $c$. Increasing $c$ improves accuracy because the model can use more motifs, but it reduces RAS because the description length increases. Increasing $s$ gives stronger simplification, since each motif covers more weights, but also makes the optimization space more restrictive. Consistent with Eq.~\eqref{opt:rich}, small blocks, especially $s\in\{2,4\}$, preserve accuracy close to the full-precision baseline even at low capacity. Larger blocks lose accuracy more quickly at low capacity and require larger $c$ to recover. The same trend appears for the other architectures in Appx.~\ref{fig:full_ablation}.
Tab.~\ref{tab:ft_results} shows that this trade-off also holds when fine-tuning larger pretrained models. For Swin-B and SmolLM2, MoMos recovers baseline-level accuracy or comparable perplexity in the less restrictive settings while still achieving RAS gains. Tab.~\ref{tab:results_overhead} compares MoMos with standard compression baselines. MoMos reaches baseline accuracy on Tiny-ViT, with only about $2\%$ training-time overhead.

\section{Insights from Algorithmic Simplification}\label{sec:insights}
\begin{wraptable}{r}{0.38\textwidth}
\vspace{-3.5em}
\centering
\small
\setlength{\tabcolsep}{4pt}

\begin{tabular}{lccc}
\toprule
Epoch & Top $1\%$ & Top $10\%$ & Top $25\%$ \\
\midrule
1   & $3.09$ & $21.79$ & $44.30$ \\
200 & $5.39$ & $34.59$ & $64.19$ \\
\bottomrule
\end{tabular}

\vspace{0.3em}
\caption{Mosaic share of most used motif for Tiny-ViT $(s=4,c=0.1)$.}
\label{tab:motif_usage}
\vspace{-1.0em}
\end{wraptable}

\paragraph{Learning Concentrates Structure.}
MoMos achieves baseline accuracy at moderate capacity, for example, Tiny-ViT with $s=2,c=5\%$ attains $86.99\%$ accuracy compared to $86.56\%$ for the baseline with RAS $3.02$. Similar trends are observed for ResNet20 and the MLP, consistent with the ablation results in Appx. Fig.~\ref{fig:full_ablation}. For the MLP, where $r_\mathrm{BDM}$ remains informative, all configurations exhibit reduced $r_\mathrm{BDM}$, with the lowest-capacity setting approaching a $2\times$ reduction, consistent with the view in Fig.~\ref{fig:bdm_complexity}. Interestingly, this effect is observed in motif usage for MoMos; motif assignments become highly concentrated during training, with a small fraction of motifs accounting for most assignments (Tab.~\ref{tab:motif_usage}), while many motifs remain unused. This suggests that our bound in Prop.~\ref{prop:momo_kc} is conservative in practice and provides insights into future directions. Additional insight into this concentration is seen in Fig.~\ref{fig:motif_concentration_s4} and Appx.~\ref{app:additional_results} (Fig.~\ref{fig:motif_concentration_s2}).

\begin{wrapfigure}[8]{r}{0.38\textwidth}
\vspace{0.1em}
\centering
\includegraphics[width=0.9\linewidth]{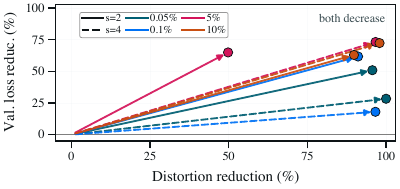}
\vspace{-0.7em}
\caption{Tiny-ViT distortion and validation loss at convergence.}
\label{fig:distortion_main}
\vspace{-0.0em}
\end{wrapfigure}

\paragraph{Structure Remains Compatible with Learning.}
MoMos introduces an approximation between the current dense weights and their motif-based reconstruction, as discussed in Sec.~\ref{sec:distinct}. We measure this along the training trajectory using the same trained model. At step $t$, the distortion is $D_t=\sum_b \|B_{t,b}-\hat{B}_{t,b}\|_F^2$, where $B_{t,b}$ is the current dense block and $\hat{B}_{t,b}$ is its MoMos reconstruction at the same step. If the imposed regularity were to negatively impact the model's ability to learn, this distortion would remain high or decrease only at the expense of validation performance. Instead, Fig.~\ref{fig:distortion_main} shows that distortion decreases while validation loss also decreases. As training proceeds, the weights move closer to a regular representation without harming optimization. Similar behavior is observed across other models given in Appx.~\ref{fig:distortion_all}.
\looseness=-1

\paragraph{Similarity of Decision Geometry.}
The loss landscape in Fig.~\ref{fig:decision_geometry} shows that algorithmic simplification does not simply preserve the dense solution in parameter space. MoMos solutions reach different regions of the local domain geometry, yet the slices show similar decision geometry. This suggests that MoMos is not compressing a fixed, dense solution but is steering optimization toward a different parameterization that realizes a comparable function while enforcing reusable structure. In line with work on loss landscapes and functionally similar solutions~\citep{li2018visualizing,draxler2018essentially}, this suggests that MoMos exploits the multiplicity of parameter realizations, biasing learning toward those with lower motif-based description length. This also aligns with the view that neural networks are biased towards functions of lower Kolmogorov complexity~\citep{valledeep}.

\begin{figure}[tbp]
    \centering
    \includegraphics[width=0.9\linewidth]{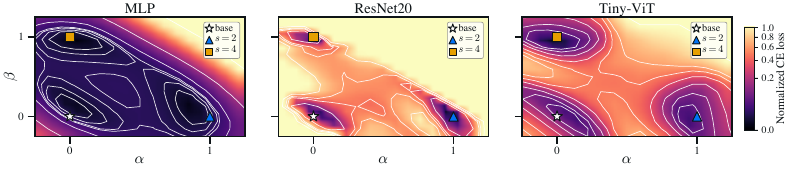}
    \vspace{-0.35cm}
    \caption{Loss landscape across architectures. For each model, we take the converged baseline parameters and two MoMos solutions with $s=2$ and $s=4$ at fixed capacity $c=0.1$, and evaluate validation cross-entropy on the plane spanned by these three parameter vectors. The coordinates $(\alpha,\beta)=(0,0)$, $(1,0)$, and $(0,1)$ correspond to the baseline, $s=2$, and $s=4$ solutions, respectively. Color shows CE loss normalized within each panel, and contours show fixed areas of loss.}
    \label{fig:decision_geometry}
    \vspace{-0.3cm}
\end{figure}

\section{Discussion}\label{sec:discussion}

\textbf{Similarities to Vector Quantization.} The MoMos parameterization in Algo.~\ref{algo:momos} is fundamentally similar to vector quantization~\citep{gray1984vector} (VQ) and is conceptually akin to weight sharing~\citep{ullrich2017soft} (WS). These approaches replace parts of the weight vector with a smaller set of shared values by either assigning blocks to codes from a learned codebook to minimize distortion~\citep{gong2014compressing}, or by directly tying network parameters. In MoMos, the codebook analog is the motif set; unlike prior work, we select this set globally across weights. And more importantly, we sample motifs at random and provide a better worst-case bound on algorithmic complexity (Prop.~\ref{fig:momos_bound}). This changes the question from tuning shared weights to identifying the reuse patterns (structure) that enable algorithmically simpler neural networks. Across several experiments, we show that such models are comparable to the unconstrained ones.

\textbf{Parameter Reuse Without Weight Generation.}
Recent weight-sharing methods learn shared filters~\citep{deeplyshared} or parameter generators~\citep{wang2023compact} to reduce the effective parameter count of neural networks. These methods assume that neural networks contain redundancy, identify useful sharing mechanisms, and reduce complexity in terms of parameter count. MoMos, in contrast, deliberately imposes this redundancy during learning and studies what happens to the model class under reuse constraints with arbitrary weights. Our analysis in Sec.~\ref{sec:methods} shows how the motif constraint changes and controls the optimization domain, while giving a tighter worst-case description-length bound for all parameterizations in the class. We then show in Sec.~\ref{sec:insights} that MoMos gives a concrete hypothesis class for studying algorithmic simplicity-biased learning~\citep{shwartz2017opening,valledeep}, as characterized in Prop.~\ref{prop:union_subspaces}.
\begin{wrapfigure}{r}{0.5\textwidth}
    \vspace{-0.25cm}
    \centering
    \includegraphics[width=1\linewidth]{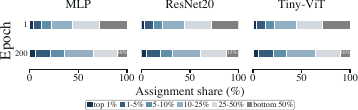}
    \vspace{-0.45cm}
    \caption{Motif assignment concentration is shown for MoMos with $s=2, c=0.1$. From epoch 1 to 200, assignment mass shifts toward the most-used motifs; the bottom 50\% receive only a small fraction of assignments at convergence. 
    }
    \label{fig:motif_concentration_s2}
    \vspace{-0.3cm}
\end{wrapfigure}
\textbf{Dense Models Can Still Become Algorithmically Simpler.}
\label{sec:sparsity}
The fraction of motifs with zero weights remains below $1\%$ in all the settings, indicating that the learned structure is dense rather than sparse. Simultaneously, as shown in Fig.~\ref{fig:motif_concentration_s2}, during training, the mosaic biases toward a small subset of motifs. This suggests the learned model function does not impose a fixed transformation of the network weights. Throughout learning, we observe that additional structure emerges from the imposed constrained on the optimization domain. Since motifs are selected randomly, we avoid confounding the effect of optimization heuristics with that of learning a better codebook; this shows that dense models can become simpler without becoming sparse.
\looseness=-1

\textbf{The Role of Scale.}\label{discussion:parameter_reuse}
BDM identifies repeating blocks in a bit string and provides a practical approximation of Kolmogorov complexity ($\mathcal{K}_B$ in Eq.~\eqref{eq:bdm}). We hypothesized that enforcing this kind of structure during neural network parameterization would yield algorithmically simpler models. MoMos does this by projecting weight blocks onto random motifs, which restricts optimization to mosaic-induced subspaces that approximate the unconstrained search space. We see evidence of this in Fig.~\ref{fig:decision_geometry}, where MoMos solutions suggest an algorithmically simpler realization of the unconstrained model behavior. Prop.~\ref{prop:fixed_budget} shows that, for a fixed budget, smaller blocks create a richer optimization space by giving the model access to more mosaics. This is also observed in Fig.~\ref{fig:ablation_vit}, where smaller blocks keep comparable performance across a broad range of capacities, while larger blocks require more capacity. This suggests that performance is best improved at low algorithmic complexity by enriching the optimization space than by increasing the number of free parameters; ultimately, this discourages over-parameterization by showing that part of the apparent need for many free parameters can be replaced by structured reuse.

\textbf{Limitations and Future Directions.}\label{sec:limitations}
While Kolmogorov complexity proxies such as BDM and RAS serve as useful surrogates in this work, they can occasionally be misleading since they are primarily defined for binary objects. Future research should focus on designing proxies specifically adapted for deep learning, such as BDM variants that employ high-precision quantization and more sophisticated approximations of algorithmic probability instead of CTM. 

The algorithmic simplification achieved by MoMos does not currently translate into improved hardware utilization. This is not an inherent limitation of the method, but rather a reflection of current hardware acceleration libraries which lack support for fast, reusable block operations. Adapting techniques like fast lookup table 
quantization schemes~\citep{li2025lut} could significantly enhance the hardware efficiency of MoMos.

\vspace{-0.3cm}
\section{Conclusion}
\vspace{-0.2cm}
Over-parameterized deep learning models lend themselves well to compression, which is often attributed to a byproduct of scale. Model compression techniques exploit the redundancies in the neural network parameterization to significantly reduce model size, while retaining most of the performance. In this work, we provide novel insights into the significant redundancies found in neural networks. To this end, we draw upon principles from algorithmic complexity theory, which posits that truly random objects are incompressible, while non-random models are compressible. Applying this principle, we demonstrate the algorithmic complexity of randomly initialized neural networks is higher than that of the same model at convergence.

This reduction in complexity suggests an increased occurrence of patterns, or more simply, structure, in the parameterization of neural networks; supporting the claim that learning induces structure and that such structure is compressible. Inspired by BDM, we propose the Mosaic-of-Motifs (MoMos) technique, which constraints the optimization space, such that model parameterizations are more compressible with provably lower worst-case algorithmic complexity. We show that MoMos parameterized networks retain comparable performance, support fine-tuning of larger pretrained models, and admit model-agnostic compressibility. These insights offer a valuable direction for explaining redundancies in deep learning and may foster new directions for future investigation.

\paragraph{Broader Impact Statement.} This paper presents work whose goal is to advance the field of machine learning. There are many potential societal consequences of our work, none of which we feel must be specifically highlighted here.

\paragraph{Generative AI Usage Statement}

ChatGPT version 5.4 and Google Gemini were used to support programming tasks, including the development of scripts for visualization and setting up experiments, to edit and refine language and grammar in selected sections of the manuscript.

\begin{ack}
All authors acknowledge funding from the European Union’s Horizon Europe Research and Innovation Action program under grant agreements No. 101070284, No. 101070408, and No. 101189771. SW and RS acknowledge funding from the Independent Research Fund Denmark (DFF) under grant agreement No. 4307-00143B. RS also acknowledges funding received under Independent Research Fund Denmark (DFF) under grant agreement number 4307-00143B. The authors thank members of \href{https://saintslab.github.io/}{SAINTS Lab} for valuable discussions.
\end{ack}

\bibliographystyle{\PaperBibliographyStyle}
\bibliography{references}

\clearpage
\appendix
\section{MoMos: Theoretical Details and Proofs}\label{app:proofs}
\subsection{Proof of Proposition \ref{prop:momo_kc} (MoMos Complexity Bound)}\label{app:proof_upper_momo}
Let $f_\W$ be a model parameterized by weights $\W \in \mathbf{R}^n$. Choose a deterministic procedure for an encoder $\enc_q$, fixing weights to $q$-bit representations. The encoder is fixed in the main text, but we formally account for the (constant) cost of specifying the encoding. The bit-string representing the weights is given by $\W \in \bits^{nq}$. Fix a block size $s \in \N$, a motif count $k \in \N$, and precision $q \in \N$.  Then define a slicing operator $\psi$ with inverse $\psi^{-1}$ satisfying $\psi^{-1}(\psi(\W)) = \W$, with $m = \ceil{n/s}$ denoting the number of blocks obtained by slicing $\W$. 

Let $\What \in \mathcal{H}_{S,K}$. By Definition \eqref{eq:def_HSK}, there exist motifs $\mathcal{Z} = (\zz_1,\dots,\zz_K)$ with $\zz_k \in \R^s$ and a mosaic $\mathcal{M} \in \{1,\dots,k\}^m$ such that
\begin{equation}
\widehat{\W} = \psi^{-1}(\zz_{\mathcal{M}(1)},\dots,\zz_{\mathcal{M}(m)}).
\label{eq:W_hat_representation_app}
\end{equation}
Recall that we use $\mathcal{K}(\What)$ to denote the Kolmogorov complexity of the fixed-precision encoding, i.e., $\mathcal{K}(\What) = \mathcal{K}(\enc_q(\What))$. We will upper bound $\mathcal{K}(\widehat{\W}) =\mathrm{\enc}_q(\widehat{\W})$ by explicitly describing a program that prints $\mathrm{enc}_q(\widehat{\W})$ and halts.

Consider a program $\mathbf{p}$ that consists of a fixed decoding and reconstruction routine, followed by a finite description that encodes the motifs and the mosaic. Since $s, k, q$, and $n$ are fixed for the $\mathcal{H}_{s,k}$, the routine can compute $m=\ceil{n/s}$ and parse input of predetermined length in constant time. For each $\zz_k \in \R^s$, the program encodes the motifs with  $\mathrm{enc}_q$ to a binary string of length $sq$. Concatenating the $k$ motif encodings yields a motif description of length $ksq$ bits. Then each entry in the motif $\mathcal{M}(i)$ takes values in $\{1,\dots,k\}$. The program encodes each value in this set by a fixed-length binary representation of length $\ceil{\log_2 k}$ bits. For example, writing $\mathcal{M}(i)-1$ in binary with padding. Concatenating the $m$ encoding yields a pattern description of length $m\ceil{\log_2 k}$ bits.

The program parses the first $ksq$ description bits as $k$ consecutive blocks of length $sq$, interpretating them as motifs $\mathrm{enc}_q(\zz_1),\dots,\mathrm{enc}_q(\zz_K)$, and then parses the next $m\ceil{\log_2 k}$ bits as $m$ encodings of length $\ceil{\log_2 k}$, producing the mosaic indices $\mathcal{M}(1),\dots,\mathcal{M}(m)$. It then forms the sequence of encoded blocks
\begin{equation}
\mathrm{enc}_q(\zz_{\mathcal{M}(1)}),\ \mathrm{enc}_q(\zz_{\mathcal{M}(2)}),\ldots,\mathrm{enc}_q(\zz_{\mathcal{M}(m)}) .
\end{equation}
Moreover, concatenates them to obtain the padded encoding length $ms$. Finally, $\mathbf{p}$ discards the last $(ms-n)q$ bits to satisfy $\psi^{-1}$. The result is exactly $\mathrm{enc}_q(\widehat{\W})$, which the routine prints before halting.

Let $\zeta$ denote the length in bits of this routine, including any constant overhead required by the reference universal machine to interpret the description format. The constant $\zeta$ does not depend on $\What$, hence the the full program length satisfies
\begin{equation}
|\mathbf{p}| \le ksq + m\ceil{\log_2 k} + \zeta.
\end{equation}

Since $\mathbf{p}$ outputs $\mathrm{enc}_q(\widehat{\W})$ and halts, the definition of Kolmogorov complexity gives
\begin{equation}
\mathcal{K}(\widehat{\W}) = \mathcal{K}(\mathrm{enc}_q(\What)) \le |\textbf{p}|
\le ksq + m\ceil{\log_2 k} + \zeta,
\end{equation}
which concludes the proof of \ref{prop:momo_kc}. \hfill $\square$ 

\subsection{Proof of Lemma \ref{lemma:linear_component} (Optimization Subspaces)}\label{app:linear_component}
Fix a mosaic $\mathcal{M} \in [k]^m$. Recall that $\mathcal{H}_{s,k} \subseteq \mathbb{R}^n$ consist of all $\What$ whose block decomposition $\psi(\What)=(\widehat{\mathbf{b}}_1,\ldots,\widehat{\mathbf{b}}_m)$ satisfies $\widehat{\mathbf{b}}_i = \widehat{\mathbf{b}}_j$ whenever $\mathcal{M}(i) = \mathcal{M}(j)$, equivalently $\widehat{\mathbf{b}}_i = \mathbf{z}_{\mathcal{M}(i)}$ for some choice of motifs $\mathcal{Z} = (\mathbf{z}_1,\ldots, \mathbf{z}_k)$.   

Let $z = (\mathbf{z}_1,\ldots, \mathbf{z}_k) \in \mathbb{R}^{ks}$ denote the concatenation of all motifs. Define $F_{\mathcal{M}}: \mathbb{R}^{ks} \to \mathbb{R}^{n}$ by
\begin{equation}
    F_\mathcal{M}(z) = \psi^{-1}(\zz_{\mathcal{M}(1)},\dots,\zz_{\mathcal{M}(m)}).
\end{equation}
By construction, as $z$ ranges over $\R^{ks}$, the image $\mathrm{Im}(F_\mathcal{M})$ is exactly the set of vectors obtained by holding $\mathcal{M}$ fixed and varying motifs, which is $\mathcal{H}_{s,k}(\mathcal{M})$.

The operation $z \to (\mathbf{z}_i,\ldots, \mathbf{z}_k)$ is linear in $z$ because it only selects and repeats blocks, and $\psi^{-1}$ is the composition of concatenation with fixed truncations that discard the padded bits; both are linear maps. Hence $\mathcal{H}_{s,k}(\Mcal) = \mathrm{Im}({F_{\mathcal{M}}})$ is a linear subspace of $\mathbb{R}^n$ and
\begin{equation}
    \dim(\mathcal{H}_{s,k}(\mathcal{M})) = \mathrm{rank}(F_\mathcal{M}) \le ks .
\end{equation}
When $s \mid n$, let $J_\mathcal{M} = \{\mathcal{M}(i),\ldots,\mathcal{M}(m) \}$ and $k_\mathcal{M} = |J_\mathcal{M}|$. Then only motifs with indices in $J_\mathcal{M}$ affect $F_\mathcal{M}(z)$. Motifs outside $J_\mathcal{M}$ are unused, thus $\mathrm{rank}(F_\mathcal{M}) \le sk_\mathcal{M}$ and $\psi^{-1}$ merely concatenates. For each index $j \in J_\mathcal{M}$ pick an index $i(j) \in \{1,\ldots,m \}$ such that $\mathcal{M}(i(j))=j$. Then motif $\widehat{\mathbf{b}}_{i(j)}$ of $F_\mathcal{M}(z)$ equals $\mathbf{z}_j$. Therefore, if $F_{\mathcal{M}}(z) = 0$, every used motif $\mathbf{z}_j$ must equal 0, which shows that the induced map from the $sk_\mathcal{M}$ ranks of the used motifs to $\mathcal{H}_{s,k}$ is one-to-one. Hence, $\mathrm{rank}(\mathcal{F}_\mathcal{M}) = sk_\mathcal{M}$ and
\begin{equation}
    \dim(\mathcal{H}_{s,k}(\mathcal{M})) = sk_\mathcal{M} \le sk \quad \text{when~} s \mid n .
\end{equation}
When $s \nmid n$, truncation in $\psi^{-1}$ can reduce rank, so only the upper bound $\dim(\mathcal{H}_{s,k}(\mathcal{M})) \le sk_\mathcal{M} \le sk$ is guaranteed. \hfill $\square$

\subsection{Proof of Proposition \ref{prop:distinctness} (Distinctness Criterion of Mosaics)}\label{app:distinctness_criterion}
Assume $s \mid n$ so that $m = n/s$ and $\psi^{-1}$ concatenates without truncating. Fix $\mathcal{M} \in \{1,\ldots,k\}^m$. Then by definition of $\mathcal{H}_{s,k}(\mathcal{M})$, the parameterization $\What \in \mathbb{R}^n$ belong to $\mathcal{H}_{s,k}(\mathcal{M})$ if an only if its blocks $\psi(\What) = (\widehat{\mathbf{b}}_1, \ldots, \widehat{\mathbf{b}}_m)$ satisfy $\widehat{\mathbf{b}}_i = \widehat{\mathbf{b}}_j$ whenever $\mathcal{M}(i) = \mathcal{M}(j)$. Thus $\mathcal{H}_{s,k}(\mathcal{M})$ is determined exactly by the partition $\mathcal{P}(\mathcal{M})$ of $\{1,\ldots,m \}$ into the non-empty parts $\{i : \mathcal{M}(i) = j \}$. If $\mathcal{P}(\mathcal{M}) = \mathcal{P}(\mathcal{M}')$, then the equalities enforces for blocks are the same for $\mathcal{M}$ and $\mathcal{M}'$, so the constrains coincide when $\mathcal{H}_{s,k}(\mathcal{M}) = \mathcal{H}_{s,k}(\mathcal{M'})$. Conversely, if $\mathcal{P}(\mathcal{M}) \neq \mathcal{P}(\mathcal{M}')$, then there exists $i,j \in \{1,\ldots,m \}$ such that $\mathcal{M}(i) = \mathcal{M}(j)$ but $\mathcal{M'}(i) \neq \mathcal{M'}(j)$. Choose blocks such that $\widehat{\mathbf{b}}_i = \widehat{\mathbf{b}}$ and all other blocks pairwise distinct. Concatenating these blocks defines a parameterization $\What$ that satisfies the block equalities for one mosaic, but violates them for the other. Hence $\What$ belongs to exactly one of $\mathcal{H}_{s,k}(\mathcal{M})$ and $\mathcal{H}_{s,k}(\mathcal{M'})$, so the two sets differ. \hfill $\square$.

\section{Additional Experimental Details}\label{app:exp_setup}
\subsection{Datasets and Tasks}
\paragraph{Full Training Setting.}
We use the MoMos parameterization in Algo.~\ref{algo:momos} across three different neural network architectures: ResNet20~\citep{resnet}, Tiny-ViT~\citep{tinyvit}, and a multi-layer perceptron (MLP) with 5 hidden layers and 256 neurons per layer. All models are trained and evaluated on CIFAR-10~\citep{cifar10}, where $5\%$ of the training set is held out for validation, giving a $47.5\mathrm{K}$/$2.5\mathrm{K}$/$10\mathrm{K}$ train/validation/test split with a batch size of 128. We augment the dataset during training with random horizontal flip and random crop with 4-pixel padding, and use RandAugment~\citep{torchvision2016}, which applies two randomly sampled augmentations per image. All models are trained from scratch for 200 epochs.

For the MLP and Tiny-ViT, we use AdamW~\citep{adamw} with learning rate $3\times10^{-4}$ and weight decay $10^{-2}$, with Tiny-ViT additionally using images resized to $224\times224$. For ResNet20, we train with SGD using learning rate $0.1$, momentum $0.9$, and weight decay $10^{-4}$. 

\paragraph{Fine-tuning Setting.}
In the fine-tuning setting, we use MoMos with a pretrained Swin Base model~\citep{liu2021swin} on Tiny ImageNet~\citep{le2015tiny}, and with SmolLM2-360M~\citep{allal2025smollm2} on WikiText-103~\citep{merity2016pointer}. For Swin Base, we use a $95\mathrm{K}$/$5\mathrm{K}$/$10\mathrm{K}$ train/validation/test split, where $5\%$ of the original training set is held out for validation, and the official validation split is used as the test set. Images are processed at $224\times224$, and the model is fine-tuned for 30 epochs using AdamW with learning rate $10^{-5}$, weight decay $10^{-8}$, batch size 128, and no learning-rate decay.

For SmolLM2-360M, we fine-tune under the causal next-token prediction objective using the official WikiText-103 train, validation, and test splits. We tokenize the raw text with the model tokenizer, concatenate the tokens, and group them into sequences of length $8192$, giving $14432$ training sequences, $31$ validation sequences, and $35$ test sequences in our setup. The model is fine-tuned for 3 epochs using AdamW with learning rate $2\times10^{-5}$, weight decay $10^{-2}$, batch size 2, gradient accumulation over 8 steps, and gradient checkpointing with half-precision floats. In the language-model setting, the QAT baseline uses the straight-through estimator described in Appx.~\ref{app:baseline_details}, excluding embeddings, normalization layers, and the language model head from fake quantization.

\begin{algorithm}[tbp]
\caption{\textsc{MoMos} Optimization}
\label{algo:momos}
\begin{algorithmic}[1]
\INPUT initialization $\W_0\in\mathbb{R}^n$, block size $s$, model capacity $c\in[0,1]$, iterations $n_T$, data $\mathcal{D}$, loss $\ell$, optimizer \textsc{Opt}
\OUTPUT $\widehat{\W}^{(n_T)}\in\mathcal{H}_{s,k}$, motifs $\mathcal{Z}^{(n_T)}$, mosaic $\Mcal^{(n_T)}$
\STATE $m \gets \ceil{n/s}$,\quad $k \gets \max\{1,\floor{c\cdot m}\}$
\FOR{$t=1,\ldots,n_T$}
    \STATE $\W^{(t)} \gets \textsc{OptStep}(\W_{t-1};\mathcal{D},\ell)$
    \STATE $(\bb_1^{(t)},\ldots,\bb_m^{(t)}) \gets \psi(\W^{(t)})$
    \STATE $\zz_1^{(t)} \gets \mathbf{0}_s$
    \IF{$k > 1$}
        \STATE Uniformly sample distinct indices $j_1,\ldots,j_{k-1}$ from $[m]$
        \FOR{$r=2,\ldots,k$}
            \STATE $\zz_r^{(t)} \gets \bb_{j_{r-1}}^{(t)}$
        \ENDFOR
    \ENDIF
    \STATE $\mathcal{Z}^{(t)} \gets (\zz_1^{(t)},\ldots,\zz_k^{(t)})$
    \FOR{$i=1,\ldots,m$}
        \STATE $\Mcal^{(t)}(i) \gets \arg\min_{r\in\{1,\ldots,k\}} \|\bb_i^{(t)} - \zz_r^{(t)}\|_2^2$
    \ENDFOR
    \STATE $\widehat{\W}^{(t)} \gets \psi^{-1}\bigl(\zz_{\Mcal^{(t)}(1)}^{(t)},\ldots,\zz_{\Mcal^{(t)}(m)}^{(t)}\bigr)$
    \STATE $\W_t \gets \widehat{\W}^{(t)}$
\ENDFOR
\end{algorithmic}
\end{algorithm}

\subsection{Comparison to Existing Compression Baselines}\label{app:baseline_details}
Besides full-precision training, we compare against five additional baselines. These are scalar k-means quantization (VQ) and product quantization (PQ)~\citep{gong2014compressing}, look-up table quantization (LUT-Q)~\citep{Cardinaux_2020}, Soft Weight Sharing (Soft-WS)~\citep{ullrich2017soft}, and weight-only quantization-aware training (QAT)~\citep{qat}. For VQ we use scalar k-means with $k=16$, 10 Lloyd iterations, and 1 restart. PQ uses $x$-axis product quantization with $k=8$ codewords per segment, segment dimension 2, and 10 Lloyd iterations. LUT-Q uses a dictionary size $K=16$ with $M=1$ assignment-update. Soft-WS uses 16 mixture components with $\pi_0=0.99$ and $\tau=0.003$. QAT is evaluated at $q\in{16,8,4,2}$ bits per weight using symmetric per-tensor fake quantization and the straight-through estimator for the rounding operation~\citep{bengio2013ste}.

For the non-MoMos baselines, we report the ratio between the original FP32 model size and the theoretical deployed size in bits, computed from the compression calculations described in the respective works. This is important because, unlike MoMos, these baselines are not equally model-agnostic. For example, VQ and PQ only target linear layers in our implementation, LUT-Q targets convolutional and linear layers, and Soft-WS targets all trainable parameters. So, depending on the architecture, the same baseline may compress most of the model or only a very small part of it.

Furthermore, for QAT at bitwidth $q$, we use the deployed bit-length of $nq$, corresponding to a storage reduction of $32/q$ relative to the original FP32 model. For VQ, parameters that are not quantized remain at 32 bits, and each quantized tensor stores a scalar codebook plus one centroid index per weight, giving $32n + \sum_i \left(32k_i + n_i\lceil \log_2 k_i \rceil\right)$ bits. LUT-Q uses the same storage formula in our implementation, but over its own set of layers. For PQ, parameters that are not quantized again remain at 32 bits, while each quantized tensor stores one smaller codebook per segment position and one codeword index per segment, giving $32n + \sum_i \left(32m_i k d + v_i m_i\lceil \log_2 k \rceil\right)$ bits. For Soft-WS with $K$ shared scalar values, we use $32n + 32K + \hat{n}\lceil \log_2 K \rceil$ bits where $n,\hat{n}$ are the full FP32 and shared parameters, respectively.

\subsection{Profiling Training Time and Memory Overhead}
Timing and memory overheads are measured separately from the main experiments. We perform a 20-epoch profiling run on a single NVIDIA A100 GPU and treat the first 5 epochs as warm-up. Reported timing and memory overheads are computed relative to the corresponding FP32 baseline within the same profiling suite. Accuracy values are not taken from these profiling runs; they are taken from the full convergence runs reported in Appx.~\ref{tab:results_main}.

\subsection{Compute Resources.}\label{sec:compute_resources}
All experiments described above are each run thrice over different seeds.Across the completed settings for MLP, ResNet20, Tiny-ViT, Swin-B, and SmolLM2, the experiments corresponds to approximately 587 single-GPU wall-clock hours. Including failed runs and preliminary experiments, we estimate an additional 25 single-GPU hours. Runs were performed on a shared cluster using NVIDIA L40S, A100 40GB/80GB, A40, H100 PCIe, and Quadro RTX 6000 GPUs. Most of these runs were on the latter type of GPU.

\subsection{Loss Landscape Visualization.}\label{app:exp_loss}
To examine where the MoMos solutions lie relative to the unconstrained solution, we visualize a local two-dimensional slice of the loss landscape, following~\cite{li2018visualizing}. Concretely, for each architecture we take the converged baseline parameters $\theta_0$ and two converged MoMos parameters, $\theta_{s=2}$ and $\theta_{s=4}$, at fixed capacity $c=0.1$. We then evaluate the validation loss on the grid
\begin{equation}
    \theta(\alpha,\beta) = \theta_0 + \alpha(\theta_{s=2}-\theta_0) + \beta(\theta_{s=4} \theta_0),
\end{equation}
where $\alpha$ and $\beta$ move from the baseline solution toward the two MoMos solutions. That is, $(\alpha,\beta)=(0,0)$ gives the unconstrained baseline, $(1,0)$ gives the MoMos solution with $s=2$, and $(0,1)$ gives the MoMos solution with $s=4$. We then sample grid points in this plane, replace the model weights by $\theta(\alpha,\beta)$, and evaluate the loss of the resulting parameterization on the validation set.

\section{Additional Results}\label{app:additional_results}
\begin{table*}[htb]
\centering
\tiny
\setlength{\tabcolsep}{1.5pt}
\renewcommand{\arraystretch}{1.0}
\begin{tabular}{lllc lll lll lll}
\toprule
\textbf{Method} & \textbf{Cap. (\%)} & \textbf{Config.} &
& \multicolumn{3}{c}{\textbf{MLP}}
& \multicolumn{3}{c}{\textbf{ResNet20}}
& \multicolumn{3}{c}{\textbf{Tiny-ViT}} \\
\cmidrule(lr){5-7}\cmidrule(lr){8-10}\cmidrule(lr){11-13}
& & &
& \textbf{Acc. (\%) $\uparrow$} & \textbf{RAS} $\uparrow$ & $\mathbf{r}_{\mathbf{BDM}}$ $\downarrow$
& \textbf{Acc. (\%) $\uparrow$} & \textbf{RAS} $\uparrow$ & $\mathbf{r}_{\mathbf{BDM}}$ $\downarrow$
& \textbf{Acc. (\%) $\uparrow$} & \textbf{RAS} $\uparrow$ & $\mathbf{r}_{\mathbf{BDM}}$ $\downarrow$ \\
\midrule
Baseline & 100 & $q=32$ &&
$56.40 \pm 1.70$ & 1.00 & 0.96 &
$92.20 \pm 0.50$ & 1.00 & 0.90 &
$86.60 \pm 0.20$ & 1.00 & 1.00 \\
\midrule
QAT & -- & $q=16$ &&
$55.80 \pm 3.80$ & 2.00 & 0.97 &
$92.40 \pm 0.30$ & 2.00 & 0.90 &
$87.00 \pm 0.20$ & 2.00 & 1.00 \\
QAT & -- & $q=8$ &&
$57.30 \pm 1.90$ & 4.00 & 0.96 &
$92.30 \pm 0.50$ & 4.00 & 0.90 &
$87.10 \pm 0.20$ & 4.00 & 1.00 \\
QAT & -- & $q=4$ &&
$55.00 \pm 3.20$ & 8.00 & 0.86 &
$91.80 \pm 0.30$ & 8.00 & 0.74 &
$86.90 \pm 0.30$ & 8.00 & 0.97 \\
QAT & -- & $q=2$ &&
$31.50 \pm 1.60$ & 16.00 & 0.07 &
$13.40 \pm 2.50$ & 16.00 & 0.02 &
$74.20 \pm 0.30$ & 16.00 & 0.54 \\
\midrule
MoMos & 0.05 & $s=2$ &&
$54.10 \pm 3.10$ & 7.97 & 0.66 &
$66.20 \pm 2.80$ & 9.10 & 0.07 &
$72.10 \pm 0.80$ & 5.80 & 1.00 \\
MoMos & 0.10 & $s=2$ &&
$56.60 \pm 0.20$ & 7.06 & 0.85 &
$81.50 \pm 0.50$ & 7.94 & 0.57 &
$76.90 \pm 2.50$ & 5.30 & 1.00 \\
MoMos & 5 & $s=2$ &&
$56.60 \pm 1.10$ & 3.52 & 0.96 &
$91.60 \pm 0.90$ & 3.95 & 0.90 &
$87.00 \pm 0.20$ & 3.02 & 1.00 \\
MoMos & 10 & $s=2$ &&
$53.40 \pm 2.40$ & 2.86 & 0.97 &
$91.70 \pm 0.70$ & 3.14 & 0.90 &
$86.80 \pm 0.20$ & 2.52 & 1.00 \\
\midrule
MoMos & 0.05 & $s=4$ &&
$48.80 \pm 6.70$ & 18.12 & 0.15 &
$10.00 \pm 0.00$ & 21.11 & 0.01 &
$45.80 \pm 5.40$ & 12.72 & 0.99 \\
MoMos & 0.10 & $s=4$ &&
$48.50 \pm 6.90$ & 15.75 & 0.33 &
$17.80 \pm 4.40$ & 17.96 & 0.01 &
$51.00 \pm 6.60$ & 11.50 & 1.00 \\
MoMos & 5 & $s=4$ &&
$55.70 \pm 0.60$ & 6.28 & 0.91 &
$87.70 \pm 2.90$ & 6.96 & 0.58 &
$85.90 \pm 0.20$ & 5.47 & 1.00 \\
MoMos & 10 & $s=4$ &&
$56.20 \pm 0.50$ & 4.60 & 0.91 &
$90.20 \pm 0.40$ & 4.96 & 0.63 &
$86.70 \pm 0.40$ & 4.16 & 1.00 \\
\midrule
VQ & -- & $k=16$ &&
$47.40 \pm 9.40$ & 7.94 & 0.97 &
$92.40 \pm 0.00$ & 1.00 & 0.90 &
$86.50 \pm 0.00$ & 6.28 & 1.00 \\
PQ & -- & $k=8,d=2$ &&
$42.80 \pm 5.90$ & 12.33 & 0.96 &
$92.80 \pm 0.00$ & 1.00 & 0.90 &
$60.60 \pm 0.00$ & 9.28 & 1.00 \\
LUT-Q & -- & $K=16,M=1$ &&
$58.20 \pm 0.00$ & 7.94 & 0.96 &
$92.30 \pm 0.00$ & 7.65 & 0.85 &
$86.60 \pm 0.00$ & 7.35 & 1.00 \\
Soft-WS & -- & $K=16,\tau=0.003$ &&
$10.00 \pm 0.00$ & 8.00 & 0.09 &
$57.50 \pm 0.00$ & 8.00 & 0.94 &
$61.70 \pm 7.80$ & 8.00 & 0.74 \\
\bottomrule
\end{tabular}
\caption{\everypar{\looseness=-1}CIFAR-10 results for MoMos and several compression baselines. We report test accuracy, relative algorithmic simplification (RAS), and the BDM ratio computed on the binarized weights as in Eq.~\eqref{eq:bdm}. MoMos configurations vary block size $s$ and capacity $c$, while baseline compression methods are evaluated using the settings described in Appx.~\ref{app:exp_setup}.}
\label{tab:results_main}
\end{table*}

\begin{table*}[htb]
\centering
\tiny
\setlength{\tabcolsep}{4pt}
\renewcommand{\arraystretch}{1.00}
\begin{tabular}{lllc rr rr rr}
\toprule
\textbf{Method} & \textbf{Cap. (\%)} & \textbf{Config.} &
& \multicolumn{2}{c}{\textbf{MLP}}
& \multicolumn{2}{c}{\textbf{ResNet20}}
& \multicolumn{2}{c}{\textbf{Tiny-ViT}} \\
\cmidrule(lr){5-6}\cmidrule(lr){7-8}\cmidrule(lr){9-10}
& & &
& \textbf{Time $\Delta$} & \textbf{Mem. $\Delta$}
& \textbf{Time $\Delta$} & \textbf{Mem. $\Delta$}
& \textbf{Time $\Delta$} & \textbf{Mem. $\Delta$} \\
\midrule
Baseline & 100 & $q=32$ &&
$0.00$ & $0.00$ &
$0.00$ & $0.00$ &
$0.00$ & $0.00$ \\
MoMos & 10 & $s=2$ &&
$5.30$ & $203.02$ &
$-3.10$ & $0.00$ &
$3.29$ & $-0.00$ \\
MoMos & 10 & $s=4$ &&
$3.00$ & $105.92$ &
$-2.99$ & $0.00$ &
$1.85$ & $-0.00$ \\
\midrule
VQ & -- & $k=16$ &&
$3.27$ & $294.49$ &
$-2.45$ & $0.00$ &
$-0.30$ & $0.00$ \\
PQ & -- & $k=8,d=2$ &&
$12.94$ & $0.10$ &
$-2.96$ & $0.00$ &
$7.78$ & $0.00$ \\
LUT-Q & -- & $K=16,M=1$ &&
$5.64$ & $330.45$ &
$58.86$ & $1.07$ &
$24.54$ & $2.75$ \\
Soft-WS & -- & $K=16,\tau=0.003$ &&
$7.02$ & $565.39$ &
$185.11$ & $0.02$ &
$66.02$ & $25.14$ \\
\bottomrule
\end{tabular}
\caption{Training-time and memory overhead between MoMos and baselines compression methods. Overheads are reported as percentages relative to the FP32baseline for the same architecture, measured in separate 20-epoch profiling runs with the first 5 epochs treated as warm-up epochs. Methods with projection steps use the Flash projection approach from~\cite{yang2026flash} for consistency.}
\label{tab:results_overhead}
\end{table*}

\begin{table*}[htb]
\centering
\tiny
\setlength{\tabcolsep}{4pt}
\renewcommand{\arraystretch}{1.05}
\begin{tabular}{llc lll lll}
\toprule
\textbf{Method} & \textbf{Cap. (\%)} & \textbf{Config.} &
\multicolumn{3}{c}{\textbf{SmolLM2 (360M params.)}} &
\multicolumn{3}{c}{\textbf{Swin-B (86M params.)}} \\
\cmidrule(lr){4-6}\cmidrule(lr){7-9}
& & &
\textbf{PPL $\downarrow$} & \textbf{RAS $\uparrow$} & &
\textbf{Acc. (\%) $\uparrow$} & \textbf{RAS $\uparrow$} & \\
\midrule
Baseline & 100 & $q=32$ &
$9.55 \pm 0.00$ & 1.00 & &
$90.35 \pm 0.17$ & 1.00 & \\
\midrule
QAT & -- & $q=16$ &
$9.55 \pm 0.00$ & 2.00 & &
$90.40 \pm 0.20$ & 2.00 & \\
QAT & -- & $q=8$ &
$10.89 \pm 0.00$ & 4.00 & &
$90.30 \pm 0.10$ & 4.00 & \\
\midrule
MoMos & 5  & $s=2$ &
$9.86 \pm 0.09$ & 2.35 & &
$90.38 \pm 0.07$ & 2.54 & \\
MoMos & 10 & $s=2$ &
$9.74 \pm 0.06$ & 2.04 & &
$90.61 \pm 0.06$ & 2.18 & \\
\midrule
MoMos & 5  & $s=4$ &
$15.38 \pm 1.23$ & 4.35 & &
$81.69 \pm 0.96$ & 4.67 & \\
MoMos & 10 & $s=4$ &
$13.71 \pm 3.29$ & 3.48 & &
$84.34 \pm 1.20$ & 3.68 & \\
\bottomrule
\end{tabular}
\caption{MoMos fine-tuning results on SmolLM2 and Swin-B. SmolLM2 is fine-tuned for 3 epochs and evaluated with perplexity, while Swin-B is fine-tuned for 30 epochs and evaluated with test accuracy. QAT baselines use weight-only quantization-aware training at the indicated bitwidths.}
\label{tab:ft_results}
\end{table*}

\begin{figure}[htb]
    \centering
    \includegraphics[width=0.9\linewidth]{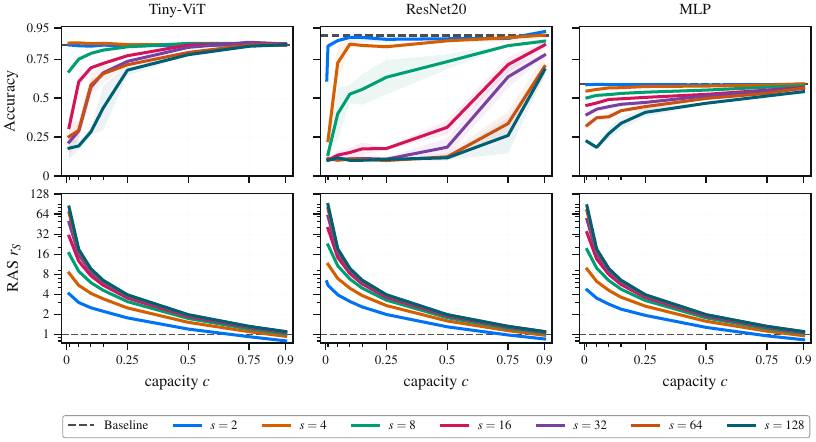}
    \caption{Ablation over capacity and block size for \textbf{(top)} validation accuracy and \textbf{(bottom)} RAS. The dashed horizontal line is the FP32 baseline. Smaller motif blocks achieve baseline accuracy at lower capacity, whereas larger blocks require higher capacity to reach comparable performance.}
    \label{fig:full_ablation}
\end{figure}

\begin{figure}[htb]
    \centering
    \includegraphics[width=1\linewidth]{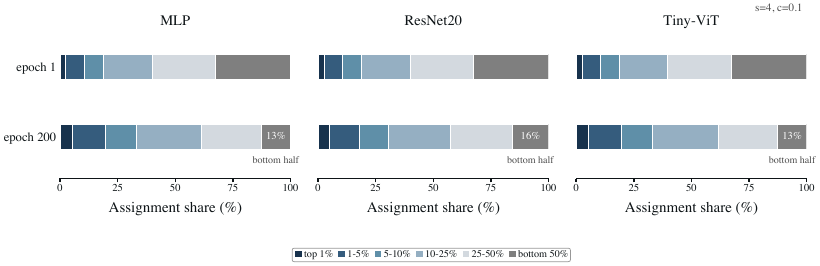}
    \caption{Motif assignment concentration is shown for MoMos with $s$ equal 2 and 4 for $c=0.1$. Motifs are ranked by usage, grouped by percentile, and the bars indicate the fraction of block assignments for each group. The assignment mass shifts toward the most used motifs, while the bottom 50\% receive only a small fraction of assignments at convergence.}
    \label{fig:motif_concentration_s4}
\end{figure}

\begin{figure}[htb]
    \centering
    \includegraphics[width=1\linewidth]{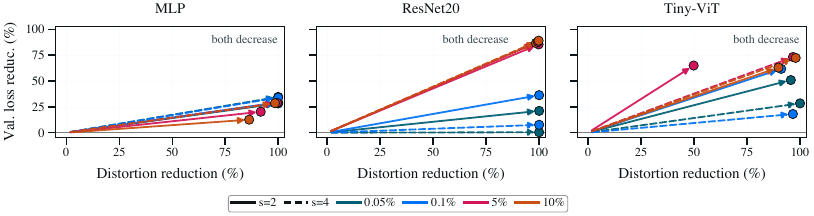}
    \caption{Distortion and validation loss decrease together. Each arrow points from and towards the first epoch and last epoch (at convergence) for a specific MoMos setting. Most settings result in a reduction in both the projection distortion and a lower validation loss.}
    \label{fig:distortion_all}
\end{figure}

\begin{figure}[htb]
    \centering
    \includegraphics[width=1\linewidth]{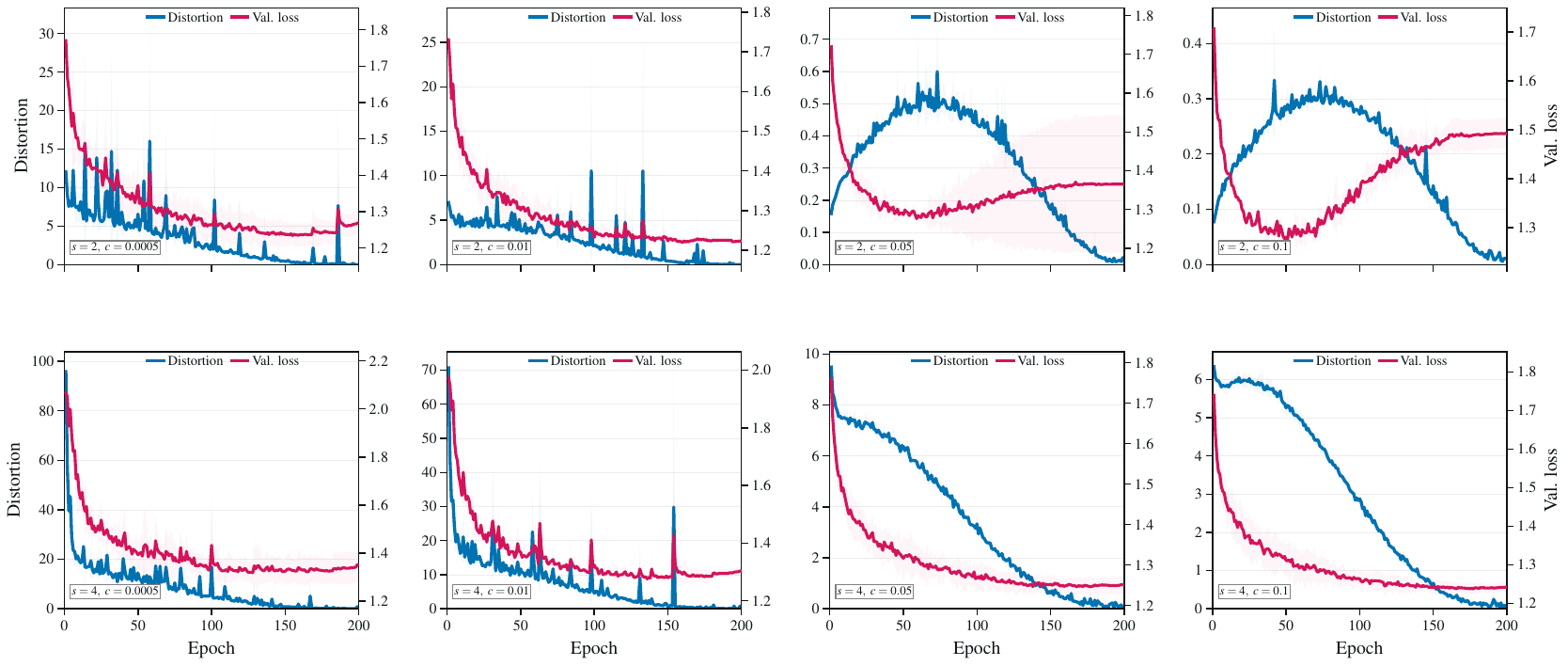}
    \caption{Distortion and validation loss during training for the MLP across MoMos settings. Each panel fixes the block size $s$ and capacity $c$.}
    \label{fig:distortion_mlp}
\end{figure}

\begin{figure}[htb]
    \centering
    \includegraphics[width=1\linewidth]{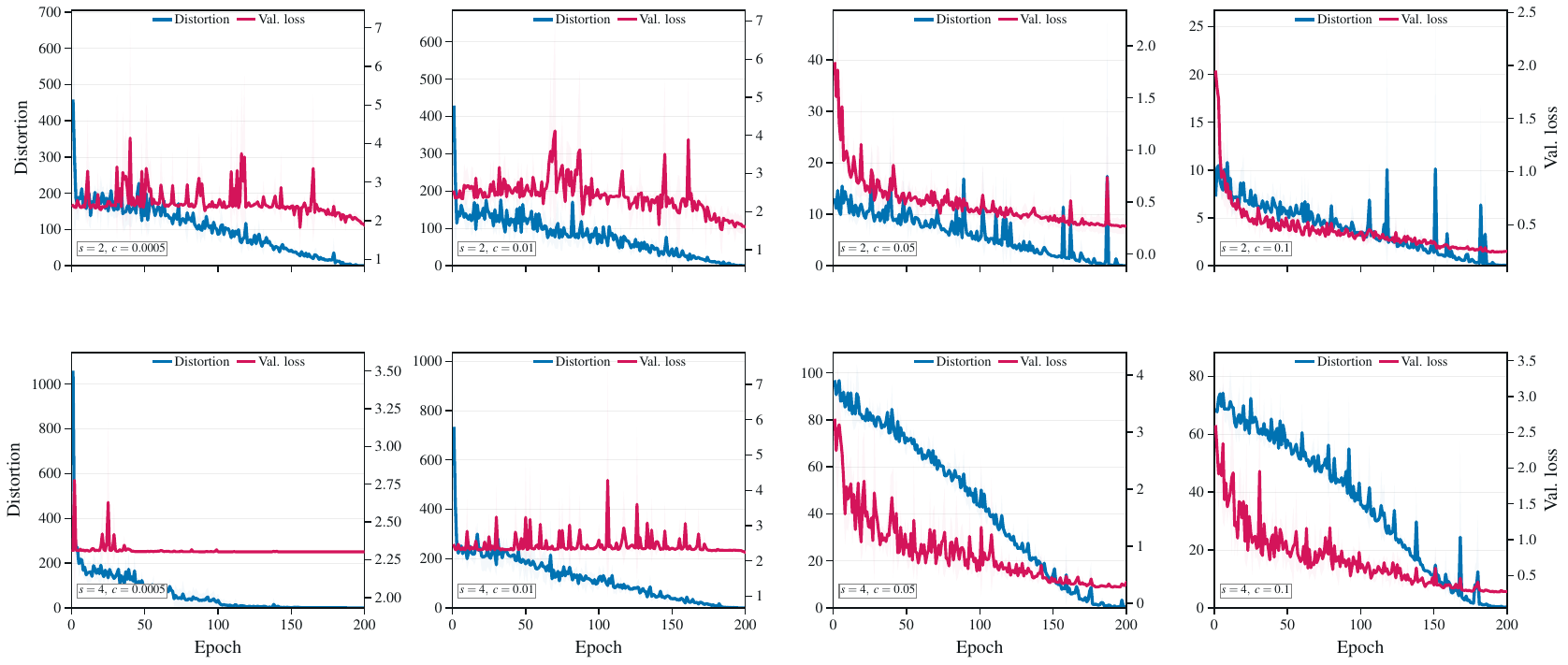}
    \caption{Distortion and validation loss during training for ResNet20 across MoMos settings. Each panel fixes the block size $s$ and capacity $c$.}
    \label{fig:distortion_resnet}
\end{figure}

\begin{figure}[htb]
    \centering
    \includegraphics[width=1\linewidth]{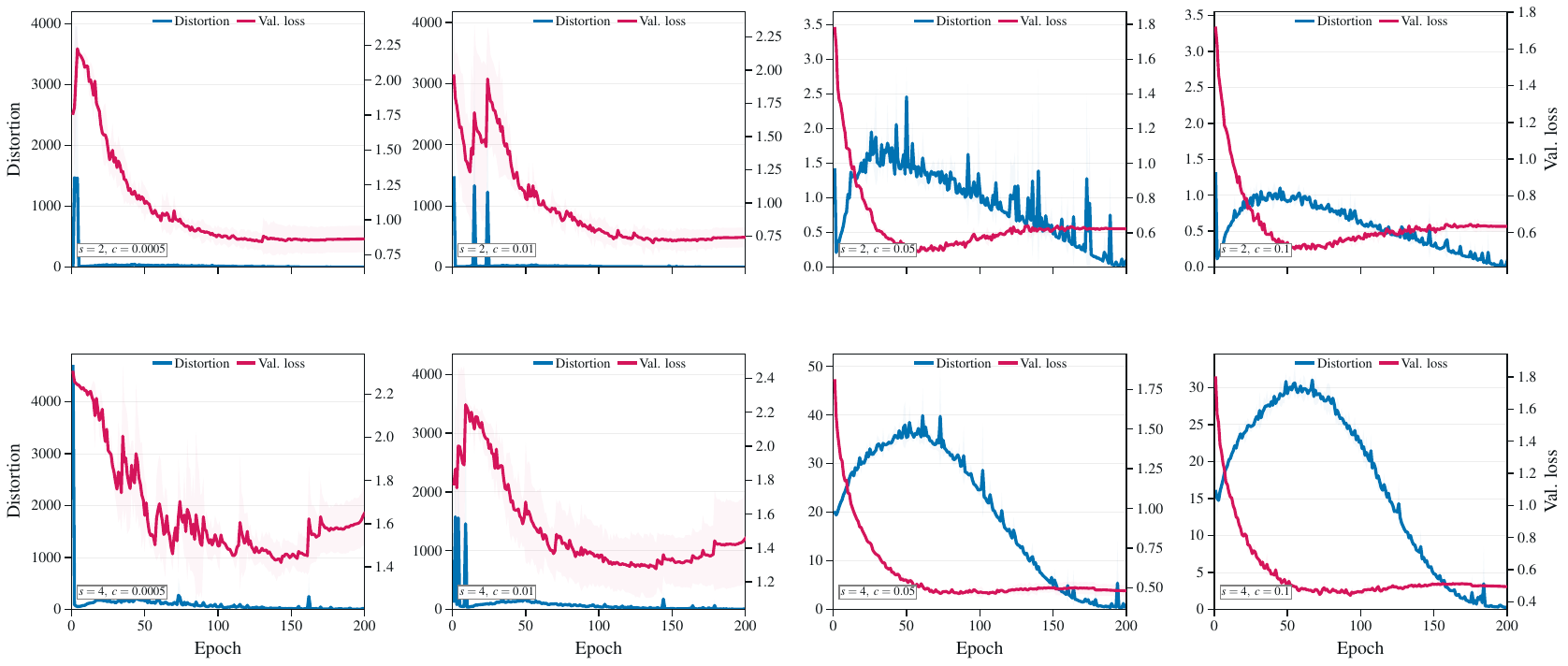}
    \caption{Distortion and validation loss during training for Tiny-ViT across MoMos settings. Each panel fixes the block size $s$ and capacity $c$.}
    \label{fig:distortion_vit}
\end{figure}

\end{document}